\documentclass{article} 
\usepackage{iclr2025_conference,times}

\usepackage[utf8]{inputenc} 
\usepackage[T1]{fontenc}    
\usepackage{hyperref}       
\usepackage{url}            
\usepackage{booktabs}       
\usepackage{amsfonts}       
\usepackage{nicefrac}       
\usepackage{microtype}      
\usepackage{xcolor}         
\usepackage{colortbl}

\usepackage{graphicx}
\usepackage{xcolor}
\usepackage{xspace}
\usepackage{amsmath}
\usepackage{amssymb}
\usepackage{bbm} 
\usepackage{subcaption}
\usepackage{algorithm}
\usepackage{algpseudocode}
\usepackage{mdwlist}
\usepackage{multirow} 
\usepackage{cuted}
\usepackage{afterpage}
\usepackage{stfloats}
\usepackage{hhline}
\usepackage{tabularx}
\usepackage{kotex}
\usepackage{wrapfig}
\usepackage{cellspace}
\usepackage{amsthm}
\usepackage{enumitem}

\usepackage{makecell}

\newcommand{\hide}[1]{}

\newcommand{\loss}{\mathcal{L}}

\definecolor{ForestGreen}{rgb}{0.133, 0.545, 0.133}
\definecolor{torange}{rgb}{0.949, 0.522, 0.}

\usepackage{pifont}
\newcommand{\gcmark}{\textcolor{ForestGreen}{\ding{51}}}

\newcommand{\method}{\textsc{SynQ}\xspace}
\newcommand{\methodfullbold}{\textbf{\underline{Syn}}thesis-aware Fine-tuning for Zero-shot \textbf{\underline{Q}}uantization\xspace}
\newcommand{\githublink}{\url{https://github.com/snudm-starlab/SynQ}\xspace}
\newtheorem{problem}{Problem}
\newtheorem{theorem}{Theorem}

\newcolumntype{Y}{>{\centering\arraybackslash}X}
\newcommand{\smallsection}[1]{\noindent\smash{\textbf{#1.}}}
\setlist[itemize]{topsep=1pt, itemsep=0pt}
\newcommand{\appfignoisecaptionvspace}{\vspace{-6mm}}

\usepackage{hyperref}
\usepackage{url}

\iclrfinalcopy

\graphicspath{ {images/} }

\title{\method: Accurate Zero-shot Quantization by Synthesis-aware Fine-tuning}

\author{Minjun Kim, Jongjin Kim \& U Kang\thanks{Corresponding Author.} \\
Seoul National University, Seoul, South Korea \\
\texttt{\{minjun.kim,j2kim99,ukang\}@snu.ac.kr} \\
}

\begin{document}

\maketitle

\vspace{-3mm}
\begin{abstract}
    How can we accurately quantize a pre-trained model without any data?
Quantization algorithms are widely used for deploying neural networks on resource-constrained edge devices.
Zero-shot Quantization (ZSQ) addresses the crucial and practical scenario where training data are inaccessible for privacy or security reasons.
However, three significant challenges hinder the performance of existing ZSQ methods: 1) noise in the synthetic dataset, 2) predictions based on off-target patterns, and the 3) misguidance by erroneous hard labels.
In this paper, we propose \textbf{\method} (\methodfullbold),
a carefully designed ZSQ framework to overcome the limitations of existing methods.
\method minimizes the noise from the generated samples by exploiting a low-pass filter.
Then, \method trains the quantized model to improve accuracy by aligning its class activation map with the pre-trained model.
Furthermore, \method mitigates misguidance from the pre-trained model's error by leveraging only soft labels for difficult samples.
Extensive experiments show that \method provides the state-of-the-art accuracy, over existing ZSQ methods. 
\end{abstract}

\vspace{-2mm}
\section{Introduction}
\vspace{-1mm}
\label{sec:intro}

\textit{How can we accurately quantize a pre-trained model without any data?}
Despite the success of deep neural networks in various domains, deploying them on resource-constrained edge devices remains challenging due to the limited computing capabilities.
Addressing this challenge involves network compression~\citep{2017Survey, SongHanSurvey, DMLabSurvey}, where quantization methods~\citep{BRECQ, SensiMix, GholamiSurvey} represent the full-precision model with low-bit numbers, achieving high compression rate and accelerated inference with minimal performance degradation compared to other methods such as pruning~\citep{MiR, KPrune, TPAMISurvey}, knowledge distillation~\citep{Mustard, PruningAcc, Pea-KD, PET, Towards}, and low-rank approximation~\citep{Falcon}. 
Zero-shot Quantization (ZSQ) \citep{DFQ} further advances this field by permitting quantization without any training data.
The importance of this approach is evident in real-world contexts where the training data are unavailable for privacy and security reasons \citep{GZSQ}. 


Among the various existing works~\citep{KegNet, DSG, SQuant}, methods that fine-tune the quantized model with the synthetic dataset exhibit outstanding performance~\citep{ZAQ, IntraQ, PLF}.
Specifically, recent methods generate synthetic samples resembling the real sample distribution by leveraging key aspects from the pre-trained model, such as batch-normalization statistics~\citep{ZeroQ}, latent embeddings~\citep{Qimera}, or texture feature distribution~\citep{TexQ}.
However, we observe that three major limitations still hinder the performance when utilizing synthetic datasets (see Section~\ref{sec:observation}).
\vspace{-1mm}
\begin{itemize}[leftmargin=3mm]
    \item{
        \textbf{Noise in the synthetic dataset.}
        Synthetic datasets have distinct high-frequency noise unlike real images that concentrate on low frequencies (see Figures~\ref{fig:frequency} and \ref{fig:amplitude}).
        This discrepancy results in inefficient fine-tuning of quantized models, thereby directly reducing model performance.
    }
    \item{
        \textbf{Predictions based on off-target patterns.}
        Quantized model from existing methods rely on incorrect image patterns for predictions (see Figure~\ref{fig:cam}).
        Such off-target reliance limits the quantized model in identifying key areas necessary for accurate classification.
    }
    \item{
        \textbf{Misguidance by erroneous hard labels.}
        Hard labels of difficult samples are often incorrect in synthetic dataset, leading to misguided fine-tuning and ultimately harming the model (see Figure~\ref{fig:error_rate}).
    }
\end{itemize}

\begin{figure}[t]
	\centering
	\includegraphics[width=1\linewidth]{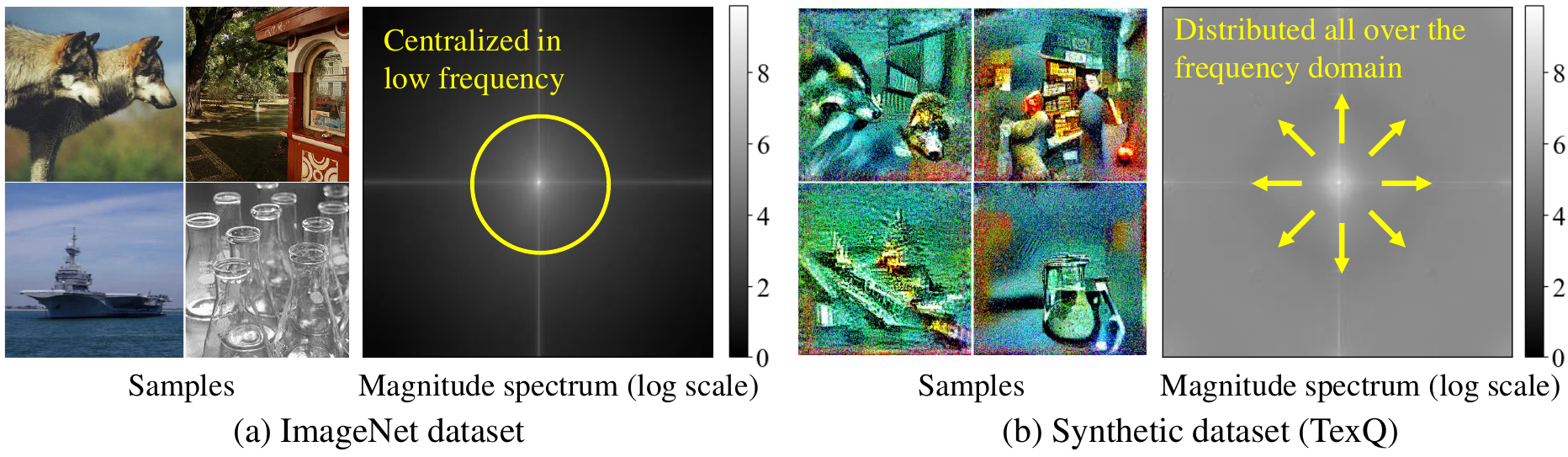}
	\vspace{-5mm}
	\caption{
		Comparison between (a) real images in ImageNet dataset and (b) generated samples in the synthetic dataset from TexQ \citep{TexQ}.
		Each set displays samples labeled as timber wolf, tobacco shop, aircraft carrier, and beaker.
		We present the average magnitude spectrum for a randomly selected batch of 256 images from each dataset, highlighting their distinct differences.
	}
	\label{fig:frequency}
	\vspace{-1mm}
\end{figure}

\begin{figure}[t]
	\centering
	\vspace{-3mm}
	\includegraphics[width=1\linewidth]{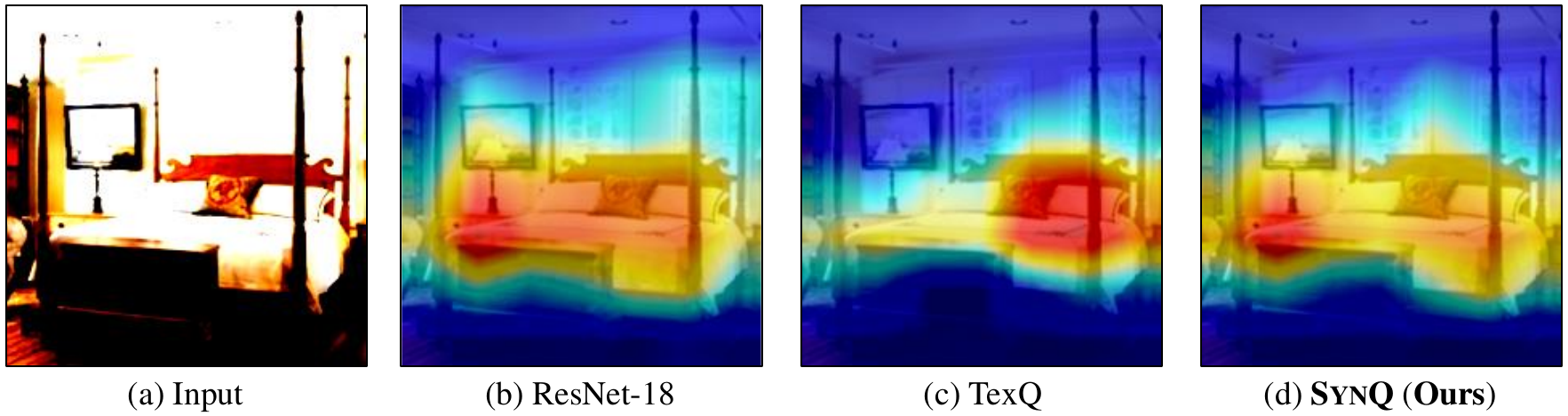}
	\caption{
		Grad-CAM \citep{GradCAM} plot of the (a) input by the (b) pre-trained ResNet-18 model on ImageNet dataset, the (c) 3bit quantized model by TexQ, and the (d) 3bit quantized model by \method.
		While TexQ fails to capture the correct image region, \method captures the region closely matching the pre-trained model.
	}
	\label{fig:cam}
	\vspace{-8mm}
\end{figure}


We propose \textbf{\method} (\methodfullbold),
an accurate ZSQ fine-tuning method to overcome the limitations of the existing methods that fine-tune with synthetic datasets.
\method clears noise from the generated samples within the synthetic dataset by applying a low-pass filter.
Then, \method ensures that the quantized model predicts from the correct image region by optimizing the class activation map (CAM) alignment loss to distill object localization knowledge.
Furthermore, \method mitigates misguidance from errors of the pre-trained model by using only soft labels for difficult samples.
Experimental results show that \method achieves the state-of-the-art performance, improving the image classification accuracy of the quantized model up to 1.74\%p compared to existing methods (see Table~\ref{tab:q1}).
\method is both powerful and versatile, seamlessly integrating into any ZSQ methods that fine-tune with synthetic datasets,
regardless of model type, quantization bits, or dataset (see Sections~\ref{subsec:q1}, \ref{subsec:q2}, Appendices~\ref{app:subsec:add_baselines}, and~\ref{app:subsec:application_genie}).
%


Our contributions are summarized as follows:
\begin{itemize}[leftmargin=3.4mm]
    \item \textbf{Observation.} Our observations clearly outline three significant challenges faced by existing ZSQ methods utilizing synthetic datasets: 1) noise in synthetic datasets, 2) predictions based on off-target patterns, and 3) misguidance by erroneous hard labels (see Figures~\ref{fig:frequency}, \ref{fig:cam}, \ref{fig:error_rate}, and \ref{fig:amplitude}).
    \vspace{-1mm}
    \item \textbf{Algorithm.} We propose \method, an accurate ZSQ method to overcome the limitations of fine-tuning with synthetic datasets. \method exploits a low-pass filter to minimize noise, aligns the class activation map to ensure prediction from the correct image region, and leverages soft labels for difficult samples to prevent misguidance from erroneous hard labels (see Section~\ref{sec:method}).
    \vspace{-1mm}
    \item \textbf{Experiments.} We experimentally show that \method consistently outperforms existing ZSQ methods on various models and datasets, achieving classification accuracy improvement of up to 1.74\%p (see Section~\ref{sec:experiments} and Appendix~\ref{appendix:discussion}).
\end{itemize}

\smallsection{Reproducibility}
All of our implementation and datasets are available at \githublink.

\section{Preliminaries and Problem Definition}
\vspace{-2mm}
\label{sec:prelim}
We introduce the ZSQ (Zero-shot Quantization) problem and describe the preliminaries.
Appendix~\ref{appendix:notation} contains the detailed descriptions of frequently used notations in this paper.
%

\subsection{Zero-shot Quantization}
\label{subsec:2.zsq}

In this work, we follow the typical two-step scheme~\citep{Qimera, HAST} to quantize a pretrained model.
First, we generate the synthetic dataset that resembles the original dataset using the pre-trained model.
Second, we fine-tune the quantized model with generated samples.

The goal of the first step is to produce synthetic dataset $\{\mathbf{x}_i\}_{i=1}^{N}$ of length $N$ with corresponding labels  $\{\mathbf{y}_i\}_{i=1}^{N}$, using the pre-trained model with parameters $\theta$.
We utilize noise optimization~\citep{ZeroQ, IntraQ}, where we initialize the synthetic dataset and labels as random Gaussian noises and randomly assigned classes, respectively; we then iteratively update the synthetic dataset $\{\mathbf{x}_i\}_{i=1}^{N}$.
Specifically, we minimize Batch Normalization Statistics (BNS) loss $\loss_{BNS}$ and Inception Loss (IL) $\loss_{IL}$ in Equation~\eqref{eq:eq1}, with hyperparameter $\alpha$ balancing them.
%
\begin{equation}
    \label{eq:eq1}
    \begin{gathered}
        \min_{\{\mathbf{x}_i\}_{i=1}^{N}} {\loss_{IL} + \alpha \loss_{BNS}}, ~~ \text{where} ~~ \loss_{IL} = \frac{1}{N} \sum_{i=1}^N {CE\left(q(\mathbf{x}_i;\theta), \mathbf{y}_i\right)}, \\
        \loss_{BNS} = \frac{1}{L} \sum_{l=1}^L {\left\Vert\boldsymbol{\mu}^l(\theta)-\boldsymbol{\mu}^l(\theta,{\{\mathbf{x}_i\}_{i=1}^{N}})\right\Vert_2^2 + \left\Vert\boldsymbol{\sigma}^l(\theta)-\boldsymbol{\sigma}^l(\theta,{\{\mathbf{x}_i\}_{i=1}^{N}})\right\Vert_2^2},
    \end{gathered}
\end{equation}

where the $l$th batch normalization (BN) layer of the pre-trained model with parameters $\theta$ (out of $L$ BN layers) stores the running mean $\boldsymbol{\mu}^l(\theta)$ and standard deviation $\boldsymbol{\sigma}^l(\theta)$ of the training dataset.
The mean $\boldsymbol{\mu}^l(\theta,{\{\mathbf{x}_i\}_{i=1}^{N}})$ and standard deviation $\boldsymbol{\sigma}^l(\theta,{\{\mathbf{x}_i\}_{i=1}^{N}})$ are calculated on $\{\mathbf{x}_i\}_{i=1}^N$ using $\theta$.
$q(\cdot; \theta)$ denotes the probability distribution by parameters $\theta$ and $CE(\cdot, \cdot)$ stands for cross-entropy loss.

The goal of the second step is to obtain the quantized model with parameters $\theta^q$, using the pre-trained model with parameters $\theta$ and synthetic dataset $\{\mathbf{x}_i\}_{i=1}^{N}$ with labels $\{\mathbf{y}_i\}_{i=1}^{N}$.
We first quantize the pre-trained model with Rounding-To-Nearest (RTN)~\citep{RTN}, then fine-tune the quantized model with parameters $\theta^q$ with the synthetic dataset from the previous step.
For strong performance, we train the quantized model by minimizing two losses, cross-entropy loss $CE(\cdot,\cdot)$ with hard label $\mathbf{y}_i$ and KL divergence loss $KL(\cdot||\cdot)$ which transfers knowledge from the pre-trained model.
Note that we directly update the quantized model, while inferencing with its dequantized parameters.
Equation~\eqref{eq:eq2} incorporates the two loss functions with balancing hyperparameter $\lambda_{CE}$.
%
\begin{equation}
    \label{eq:eq2}
    \min_{\theta^q}{\loss_{ZSQ}} = \min_{\theta^q}{\frac{1}{N} \sum_{i=1}^N {KL\big(q(\mathbf{x}_i; \theta)||q(\mathbf{x}_i; \theta^q)\big) + \lambda_{CE} CE\big(q(\mathbf{x}_i; \theta^q), \mathbf{y}_i\big)}}.
\end{equation}

\subsection{Difficulty of an Image}
\label{subsec:2.diff}

Difficulty of an image represents how easily the model $\theta$ can misclassify the image $\mathbf{x}_i$.
Among various methods \citep{WhyShould, FocalLoss, EasyAndHard, EffDataset} to evaluate the difficulty of the model in correctly classifying an image, we follow the probability-based approach~\citep{GHM} which is used in previous ZSQ methods~\citep{HAST}.
The difficulty $\delta(\mathbf{x}_i, \theta)$ is determined by how low the model's predicted probability is for the correct label as described in Equation~\eqref{eq:eq3}.
%
\begin{equation}
    \label{eq:eq3}
    \delta(\mathbf{x}_i, \theta) = 1 - q_{\mathbf{y}_i}(\mathbf{x}_i; \theta),
\end{equation}

where $q_{\mathbf{y}_i}(\mathbf{x}_i; \theta)$ is the probability of label $\mathbf{y}_i$ predicted by the model with parameters $\theta$.
This definition employs the true label to specifically highlight the model's ambiguity toward the image.
Models display an error rate of 0 for difficulties below 0.5 and an increasing error rate for higher difficulties as shown in Figure~\ref{fig:error_rate}, indicating either incorrectness or uncertainty in model predictions.

\subsection{Problem Definition}
\label{subsec:2.problem}
Given a pre-trained image classification model and quantization bits, Zero-shot Quantization (ZSQ) targets to optimize the quantized model to maintain performance without any real images.
We give the formal definition as in Problem~\ref{problem}.


\begin{problem}[Zero-shot Quantization]
\label{problem}
We have a pre-trained model with parameters $\theta$ and quantization bits $B$.
Zero-shot quantization is to optimize the quantized model with parameters $\theta^q$ for maximum accuracy within the $B$bit limit without the use of real data.
\end{problem}


\vspace{-3mm}
\section{Observation}
\vspace{-2mm}
\label{sec:observation}
We present the observations that highlight the three major challenges posed to existing methods.

\smallsection{Noise in the synthetic dataset}
The synthetic dataset is noisy, as it is produced by noise optimization that starts with a Gaussian noise.
In Figure \ref{fig:frequency}, we compare (a) real images from the ImageNet dataset with (b) generated samples from the synthetic dataset following TexQ \citep{TexQ}.
Generated samples display distinct grainy noise that leads to an evenly distributed frequency magnitude spectrum, in contrast to the real images whose magnitude is primarily concentrated in the low-frequency area.
Note that we investigate the frequency magnitude spectrum by applying Fourier transform \citep{Cooley-Tukey, YC2, YC1} on images.
Moreover, Figure \ref{fig:amplitude} shows the severe differences in amplitude distributions between (a) real images and (b) generated samples (refer to Appendix~\ref{app:subsec:noise_synthetic} for results on other baselines and datasets).
This frequency domain discrepancy challenges the quantized model to restore classification performance during fine-tuning.

\begin{wrapfigure}[19]{R}{4.8cm}
	\centering
	\vspace{-4mm}
	\includegraphics[width=4.8cm]{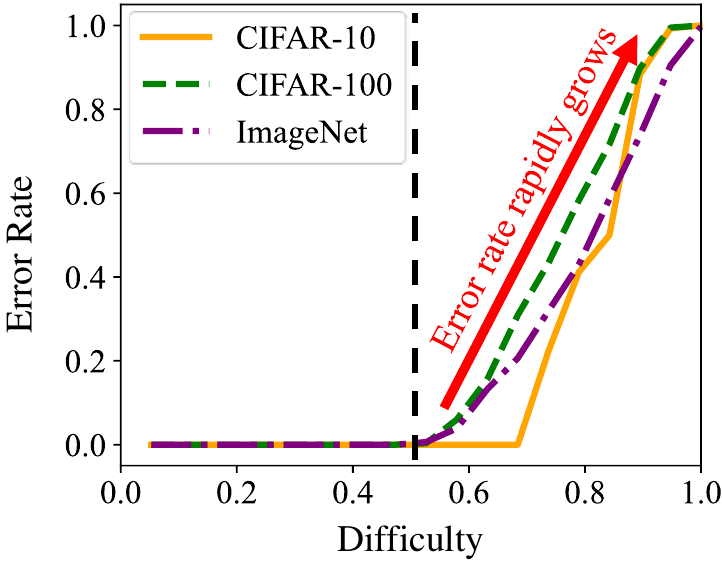}
	\vspace{-1.5mm}
	\caption{
		Error rates of pre-trained ResNet-20 on CIFAR-10 (yellow) and CIFAR-100 (green), and ResNet-18 on ImageNet (purple) by difficulty.
		Error rate rapidly grows as the difficulty exceeds 0.5.}
	\label{fig:error_rate}
\end{wrapfigure}

\smallsection{Predictions based on off-target patterns}
Fine-tuning with a synthetic dataset leads the quantized model to rely on incorrect image patterns for predictions.
Figure \ref{fig:cam} shows the discriminative regions that Grad-CAM \citep{GradCAM} identifies for the ground-truth class across three models: (a) pre-trained ResNet-18 model on the ImageNet dataset, (b) 3bit quantized model by TexQ \citep{TexQ}, and (c) 3bit quantized model by our \method.
Note that TexQ predicts based on wrong regions, unlike the pre-trained model which accurately captures critical regions (refer to Appendix~\ref{app:subsec:cam_discrepancy} for further analysis).
This mismatch definitely harms the quantization performance.

\vspace{-1mm}
\smallsection{Misguidance by erroneous hard labels}
Reliance on erroneous hard labels in the synthetic dataset leads to misguided fine-tuning outcomes.
Figure \ref{fig:error_rate} shows the growing error rates for pre-trained ResNet \citep{ResNet} models on CIFAR-10, CIFAR-100, and ImageNet datasets as image difficulty increases.
Difficulty of an image is defined as Equation \eqref{eq:eq3}, detailed in Section \ref{subsec:2.diff}.
Consequently, the pre-trained model often mislabels samples with a difficulty level over 0.5.
These erroneous hard labels of difficult samples damage quantization performance.

\vspace{-2mm}
\section{Proposed Method}
\vspace{-2mm}
\label{sec:method}
\subsection{Overview}
\label{subsec:4.1}
\vspace{-1mm}
We propose \textbf{\method} (\methodfullbold),
an accurate Zero-shot Quantization (ZSQ) method addressing the following three major challenges of existing methods that fine-tune with the synthetic dataset.
These are the three main challenges that must be tackled:

\begin{itemize}[leftmargin=7.1mm]
    \item[\textbf{C1.}] \textbf{Noise in the synthetic dataset.}
    Previous methods fine-tune the quantized model with a noisy synthetic dataset, which exhibits a distribution discrepancy of frequency domain compared to real images.
    How can we minimize the effect of the noise within the generated samples?

    \item[\textbf{C2.}] \textbf{Prediction based on off-target patterns.}
    The quantized model predicts based on incorrect image regions that are unlike those observed in the pre-trained model.
    How can we optimize the quantized model to more accurately utilize on-target patterns?

    \item[\textbf{C3.}] \textbf{Misguidance from erroneous hard labels.}
    Despite the high error rate of difficult samples, existing works trust erroneous hard labels.
    How can we address the misguidance by hard labels?

\end{itemize}

\begin{figure}[t]
	\centering
	\resizebox{0.98\linewidth}{!}{
		\includegraphics[width=\linewidth]{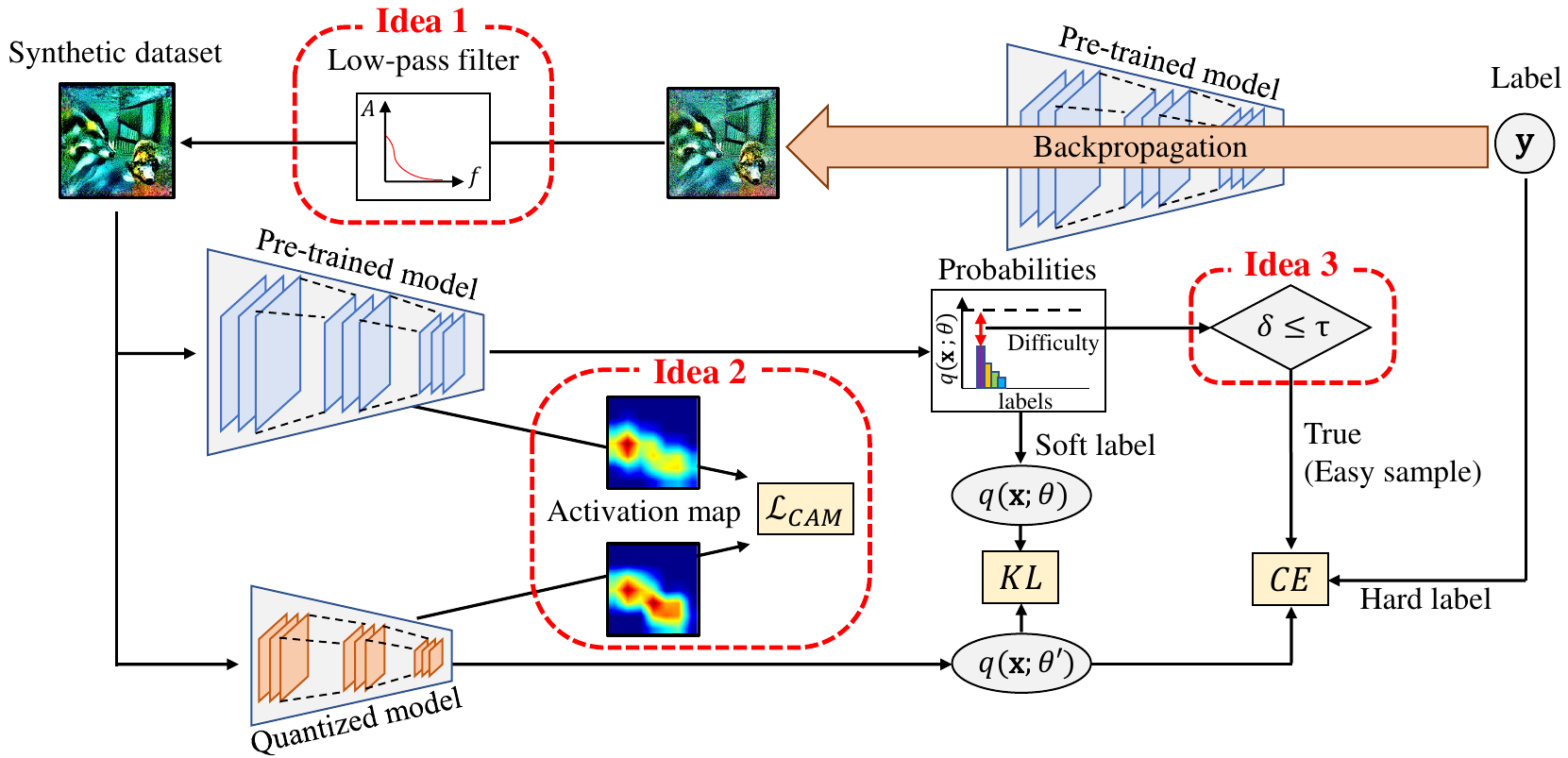}
	}
	\vspace{-1mm}
	\caption{Overall architecture of \method. Our main ideas are 1) low-pass filter, 2) alignment of class activation map, and 3) soft labels for difficult samples. See Section \ref{sec:method} for details.}
	\label{fig:method}
	\vspace{-4mm}
\end{figure}
We address these challenges with the following main ideas:

\begin{itemize}[leftmargin=5.9mm]
    \item[\textbf{I1.}] \textbf{Low-pass filter (Section~\ref{subsec:4.2}).}
    We directly reduce the noise from the dataset by exploiting a Gaussian low-pass filter in the frequency domain.

    \item[\textbf{I2.}] \textbf{Alignment of class activation map (Section~\ref{subsec:4.3}).}
	We align the class activation map between the pre-trained and quantized models, directly distilling knowledge to identify the correct image region from the pre-trained model to the quantized model.

    \item[\textbf{I3.}] \textbf{Soft labels for difficult samples (Section~\ref{subsec:4.4}).}
    For difficult samples, we fine-tune only with soft labels or predictions from the pre-trained model to reduce ambiguity.

\end{itemize}

Figure~\ref{fig:method} illustrates the overall process of \method.
\method first generates a synthetic dataset from arbitrary labels.
Then, \method exploits a Gaussian low-pass filter to refine the samples by removing noise.
With this filtered dataset, \method fine-tunes the quantized model with KL divergence and cross-entropy losses, following the standard ZSQ framework.
\method also optimizes CAM alignment loss $\loss_{CAM}$ to enhance activation map alignment for better critical region detection.
\method exploits the threshold $\tau$ to decide on the application of cross-entropy loss based on the difficulty of a sample.

\subsection{Low-pass Filter}
\label{subsec:4.2}

The first step of Zero-shot Quantization (ZSQ) is to produce the synthetic dataset which effectively mimics the real dataset.
Existing methods leverage the prediction and batch normalization statistics of the pre-trained model to generate samples.
However, their limitation is the noise in the synthetic dataset, as discussed in Figure~\ref{fig:frequency}.
We investigate the intensity of this noise, by performing the Fourier transform on the datasets.
Figure~\ref{fig:amplitude} illustrates the amplitude distribution of (a) ImageNet dataset, (b) the synthetic dataset by TexQ~\citep{TexQ}, and (c) Gaussian-filtered samples 
based on the distance from the center.
The dark solid line indicates the mean distribution,
and the surrounding colored region  shows the standard deviation within a batch of 256 randomly chosen images.
While the ImageNet dataset primarily exhibits lower frequency components, the synthetic dataset contains more high-frequency components, clearly indicating a higher level of sharpness and noise.
This noise is observed in various ZSQ methods, regardless of the setting (see Appendix~\ref{app:subsec:noise_synthetic}).

To mitigate this noise, we exploit a Gaussian low-pass filter on the generated samples.
Given a sample $\mathbf{x}_i$ with width $W$, height $H$, and filtering hyperparameter $D_0$ which is related to cut-off frequency, we compute the filtered sample $\mathbf{x}_i^F$ as shown in Equation~\eqref{eq:eq4}.
\begin{equation}
    \label{eq:eq4}
    \small
    \mathbf{x}^{F}_i = \mathcal{F}^{-1} \left( \mathbf{G} \odot \mathcal{F}(\mathbf{x}_i) \right),
    \mathbf{G}_{uv} = \exp{(-\frac{(D(u, v))^2}{2D_0^2})},
    D(u, v) = \sqrt{(u - \frac{W}{2})^2 + (v - \frac{H}{2})^2},
\end{equation}
where $D(u, v)$ denotes the distance from the coordinate $(u, v)$ to the center in the frequency domain and $\odot$ is an element-wise multiplication.
This Gaussian low-pass filter $\mathbf{G}$ works in the frequency domain from conducting Fourier transform $\mathcal{F}$, we then apply inverse Fourier transform $\mathcal{F}^{-1}$ to obtain the filtered sample $\mathbf{x}_i^F$.
Figure \ref{fig:amplitude}(c) clearly shows the positive effect of the filter: removal of noise in the high-frequency region resulting in an amplitude distribution aligning with that of real images.
We further investigate the robustness of low-pass filter towards different types of noise in Appendix~\ref{app:subsec:robustness_noise}.

\begin{figure}[t]
	\centering
	\includegraphics[width=\textwidth]{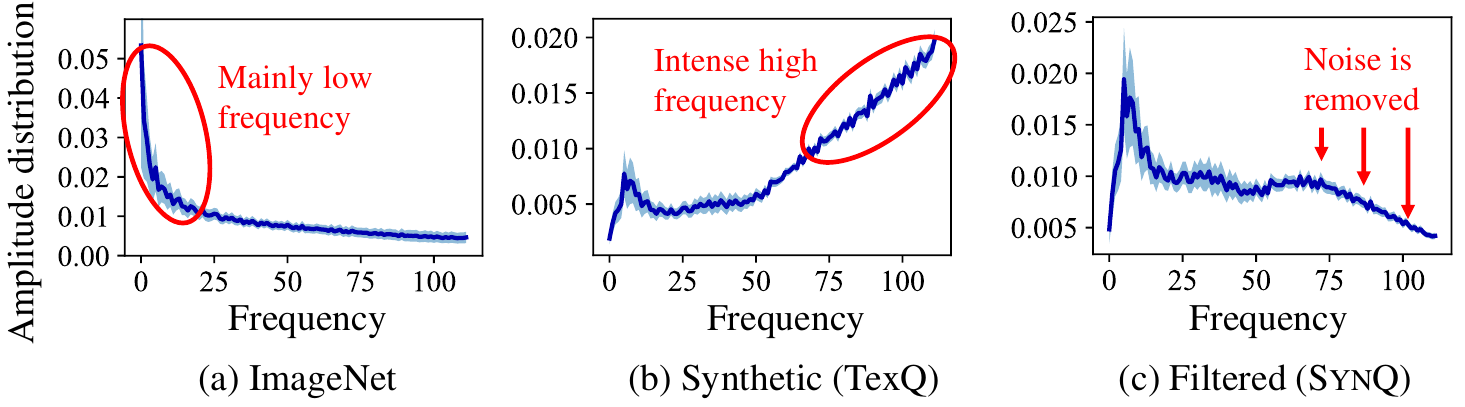}
	\vspace{-7mm}
	\caption{Comparison of amplitude distribution among (a) ImageNet dataset, (b) synthetic dataset by TexQ, and (c) filtered samples. After filtering, the  distribution closely aligns with that of real images.}
	\label{fig:amplitude}
	\vspace{-4mm}
\end{figure}

\subsection{Alignment of Class Activation Map}
\label{subsec:4.3}

The next step is to fine-tune the quantized model to achieve high accuracy.
The first challenge of this step is to ensure
that the quantized model makes predictions using on-target image patterns.
Existing methods fine-tune the model with classification and knowledge distillation from the pre-trained model, using the synthetic dataset.
However, the quantized model from these methods fail to properly localize the object as depicted in Figure \ref{fig:cam}.
To ensure the quantized model to make prediction based on correct image regions, we directly align the class activation map between pre-trained and quantized models.
We optimize the class activation map (CAM) alignment loss $\loss_{CAM}$ by minimizing the mean square error between the saliency maps $\mathbf{S}^{\theta}(\mathbf{x}_i)$ and $\mathbf{S}^{\theta^q}(\mathbf{x}_i)$ of the pre-trained and quantized models, respectively.
Among various techniques~\citep{CAM, GradCAM, Paying, GradCAM++} to highlight the important region of the image, we select Grad-CAM~\citep{GradCAM} due to its simplicity and superiority, which we discuss further in Section~\ref{subsec:q3}.
Grad-CAM generates the saliency map $\mathbf{S}^{\theta}(\mathbf{x}_i)$ by weighting the activations with their gradients, emphasizing the regions in the input image that have the greatest impact on the model's prediction.
We formulate CAM alignment loss as Equation~\eqref{eq:eq5}.
\vspace{1mm}
\begin{equation}
    \label{eq:eq5}
    \begin{gathered}
    \loss_{CAM}(\mathbf{x}_i;\theta, \theta^q) = \lVert \mathbf{S}^{\theta}(\mathbf{x}_i)-\mathbf{S}^{\theta^q}(\mathbf{x}_i)\rVert _F^2, \\
    \mathbf{S}^{\theta}(\mathbf{x}_i) = \text{ReLU} \left(\sum_{k} \left(\frac{1}{W_kH_k} \sum_{w=1}^{W_k}{\sum_{h=1}^{H_k}{\frac{\partial y^{\mathbf{y}_i}}{\partial \mathbf{A}^{k;\theta}_{wh}(\mathbf{x}_i)}}}\right) \mathbf{A}^{k;\theta}(\mathbf{x}_i)\right),
    \end{gathered}
    \vspace{1mm}
\end{equation}
where $\mathbf{A}^{k;\theta}(\mathbf{x}_i)$ denotes the activations of the last layer at channel $k$ with
$W_k$ and $H_k$ representing its width and height, respectively.
The gradient of the predicted score $y^{\mathbf{y}_i}$ for the true class $\mathbf{y}_i$ with respect to the activation $\mathbf{A}^{k;\theta}_{wh}(\mathbf{x}_i)$ at spatial location $(w, h)$ indicates the contribution of each activation to the model's prediction.
Figure~\ref{fig:cam} clearly shows that $\loss_{CAM}$ enables \method to accurately target the correct image regions as the pre-trained model does (see Appendix~\ref{app:subsec:cam_discrepancy} for further analysis).

\subsection{Soft Labels for Difficult Samples}
\label{subsec:4.4}

The second challenge of the fine-tuning step is the misguidance from possibly mislabeled samples.
Existing works assign random classes as labels for generated samples, then minimize the Inception Loss (IL) $\loss_{IL}$ in Equation~\eqref{eq:eq1} to optimize the image so that the pre-trained model predicts the assigned labels.
However, the pre-trained model frequently mislabels difficult samples in this approach, as the higher difficulty indicates that the pre-trained model assigns lower probabilities to the true label, following the definition in Equation~\eqref{eq:eq3}.

To avoid this misguidance, we exclude the cross-entropy loss with the hard labels for difficult samples.
We classify samples as easy or difficult based on a difficulty threshold $\tau$.
For easy samples, we optimize both the cross-entropy loss with hard labels and the KL divergence with soft labels.
In contrast, for difficult samples, we exclusively optimize the KL divergence with soft labels, completely omitting the cross-entropy loss.
This approach focuses on replicating the pre-trained model's responses to ambiguous images, minimizing performance degradation caused by overconfidence in hard labels.
Note that previous methods apply both soft and hard labels irrespective of sample difficulty.
\vspace{-1mm}

\subsection{Objective Function}
\label{subsec:4.5}

Combining all three ideas of \method, we modify the loss function for the fine-tuning phase from $\loss_{ZSQ}$ in Equation~\eqref{eq:eq2} to $\loss_{\method}$ in Equation~\eqref{eq:eq6}.
\vspace{-2mm}
\begin{equation}
    \label{eq:eq6}
    \scriptsize
    \loss_{\method}
    = \frac{1}{N} \sum_{i=1}^N
    \Big(
    KL\big(q(\mathbf{x}^F_i; \theta)||q(\mathbf{x}^F_i; \theta^q)\big)
    +  \mathbf{1}_{\{\delta(\mathbf{x}^F_i, \theta) \leq \tau\}} \lambda_{CE} CE\big(q(\mathbf{x}^F_i; \theta^q), \mathbf{y}_i\big)
    +  \lambda_{CAM}\loss_{CAM}\big(\mathbf{x}^F_i; \theta, \theta^q\big)
    \Big),
    \vspace{-2mm}
\end{equation}
where $\lambda_{CE}$ and $\lambda_{CAM}$ are balancing hyperparameters for cross-entropy loss and CAM alignment loss, respectively.
$\mathbf{1}_{\{\cdot\}}$ is the indicator function which returns 1 if the inner statement is true and 0 otherwise.
We train with the filtered samples $\mathbf{x}_i^F$ (Section \ref{subsec:4.2}) to remove the noise in the synthetic dataset.
Then, we align the class activation map between the pre-trained and quantized models by optimizing $\loss_{CAM}$ (Section \ref{subsec:4.3}) to transfer the knowledge of finding an object on the image.
We also exclude cross-entropy loss for difficult samples with a threshold $\tau$ (Section \ref{subsec:4.4}) to mitigate the impact of misguidance from hard labels.

\method is compatible with any ZSQ method utilizing synthetic datasets~\citep{IntraQ, AdaSG, Genie}.
We adopt calibration center synthesis~\citep{TexQ}, difficult sample generation, and sample difficulty promotion~\citep{HAST} because we observe they generally perform better in ZSQ (refer to Appendix~\ref{appendix:setup} for details).
We visualize the generated images within synthetic dataset in Figure~\ref{app:fig:vis}.
The adaptability of \method is clearly demonstrated through further experiments on other Zero-shot QAT and PTQ methods in Appendices~\ref{app:subsec:add_baselines} and~\ref{app:subsec:application_genie}, respectively.
We formulate the overall algorithm of \method in Algorithm \ref{alg:method}.

\smallsection{Complexity Analysis}
We analyze the time complexity of \method, where $N$ and $L$ represent the numbers of training samples and layers, respectively.
\begin{theorem}[Time Complexity of \method]
	Given a model with an inference complexity of $O(T_{\theta})$, the time complexity for the quantization procedure (Algorithm~\ref{alg:method}) of \method is $O\big(NLT_{\theta}\big)$.
	\label{thm:thm1}
\end{theorem}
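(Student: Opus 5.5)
We bound the cost of each stage of Algorithm~\ref{alg:method} separately and sum, treating the number of generation iterations, fine-tuning epochs, image size $W\times H$, and number of classes as constants. The stages are: (i) synthetic data generation by minimizing Equation~\eqref{eq:eq1}, (ii) low-pass filtering by Equation~\eqref{eq:eq4}, and (iii) fine-tuning with $\loss_{\method}$ in Equation~\eqref{eq:eq6}.

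\textbf{Stage (i): generation.} For each of the $N$ samples, one optimization step needs a forward and a backward pass through the pre-trained model. Each pass costs $O(T_{\theta})$, since backpropagation is within a constant factor of inference. The BNS loss additionally compares batch statistics at each of the $L$ BN layers. Computing each layer's statistics costs at most its share of the forward cost, so we charge it conservatively as $O(T_{\theta})$ per layer. This gives $O(LT_{\theta})$ per sample and $O(NLT_{\theta})$ overall. The inception loss adds only $O(T_{\theta})$ per sample.

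\textbf{Stage (ii): filtering.} Each image is filtered by an FFT, an elementwise product with $\mathbf{G}$, and an inverse FFT. This costs $O(WH\log(WH))$ per image, a constant independent of the model. Even under the weak assumption that $T_{\theta}$ is at least linear in the input size, this is dominated by $O(T_{\theta})$, so the total is $O(N T_{\theta})$.

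\textbf{Stage (iii): fine-tuning.} For each sample we need:
\begin{itemize}
\item forward passes of $\theta$ and $\theta^q$ (the dequantized model has the same architecture, so $O(T_{\theta})$ each);
\item the difficulty $\delta$ from Equation~\eqref{eq:eq3}, read off the existing forward output in $O(1)$;
\item the KL and CE terms, which cost time linear in the number of classes;
\item the Grad-CAM maps of Equation~\eqref{eq:eq5}, each requiring one backward pass to the last layer's activations ($O(T_{\theta})$), then a weighted channel sum and ReLU costing no more than that layer's forward cost;
\item the Frobenius loss, computed in time linear in the map size;
\item backpropagation of $\loss_{\method}$ into $\theta^q$.
\end{itemize}

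The last item is the main obstacle. $\loss_{CAM}$ depends on gradients of $\theta^q$, so its backpropagation requires second-order (double-backward) differentiation through the Grad-CAM computation. My plan is to argue that reverse-mode differentiation of a computation graph costs only a constant factor more than evaluating it. The graph computing $\mathbf{S}^{\theta^q}$ itself has cost $O(T_{\theta})$, so differentiating it is again $O(T_{\theta})$. If one instead charges the per-layer gradient terms individually across the $L$ layers, the bound becomes $O(LT_{\theta})$ per sample; that is still within the claimed bound. Either way, fine-tuning costs $O(NLT_{\theta})$.

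Summing the three stages gives $O(NLT_{\theta}) + O(NT_{\theta}) + O(NLT_{\theta}) = O(NLT_{\theta})$, which proves Theorem~\ref{thm:thm1}.
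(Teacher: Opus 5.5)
Your proof is correct and uses the same three-stage decomposition as the paper (generation, filtering, fine-tuning, then sum), but the factor $L$ comes from a different place.

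\textbf{The paper's accounting.}
\begin{itemize}
\item Generation is charged only $O(NT_{\theta})$: one forward pass per sample for $\loss_{IL}$ and $\loss_{BNS}$.
\item The factor $L$ comes from fine-tuning. The paper asserts that Grad-CAM saliency maps are produced ``for all $L$ layers,'' at one forward and $L$ backward passes per sample.
\item It does not address the second-order differentiation needed to backpropagate $\loss_{CAM}$ into $\theta^q$.
\end{itemize}

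\textbf{Your accounting.}
\begin{itemize}
\item You read Equation~\eqref{eq:eq5} literally, as a single last-layer map.
\item You invoke the cheap-gradient principle to show the CAM term costs $O(T_{\theta})$ per sample, including the double backward.
\item You introduce $L$ only through a deliberately loose per-layer $O(T_{\theta})$ charge for the BN statistics. Their true total is $O(T_{\theta})$, since per-layer activation sizes sum to at most the forward cost.
\end{itemize}

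\textbf{What each buys.} Your route is more careful about the one genuinely nontrivial cost, differentiating through a gradient. Once the slack is removed, it in fact yields the stronger bound $O(NT_{\theta})$, so the $L$ in the theorem is not tight under your reading. The paper's route gives an explicit mechanism that matches the theorem's form, one backward pass per layer's saliency map. However, that multi-layer Grad-CAM goes beyond what Equation~\eqref{eq:eq5} specifies.

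\textbf{One small imprecision.} Assuming $T_{\theta}$ is at least linear in the input size does not by itself dominate $WH\log(WH)$. Your constant-image-size assumption already makes the filtering cost $O(1)$ per image, though, so nothing breaks.
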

\vspace{-4mm}
\begin{proof}
	See Appendix~\ref{app:subsec:time_complexity}.
\end{proof}
\vspace{-3mm}
Theorem \ref{thm:thm1} demonstrates that \method is an efficient approach, with a time complexity scaling linearly with the number of training samples $N$ and model layers $L$.
Furthermore, \method generates only 5,120 samples, making it significantly faster than generator-based methods such as AdaSG~\citep{AdaSG} and AdaDFQ~\citep{AdaDFQ}, which produce over 1 million samples (see Appendix~\ref{app:subsec:size_syn_dataset} for experiments with different sizes of dataset).
We perform a runtime analysis of \method in Appendix~\ref{app:subsec:runtime_analysis} to analyze the computational overhead of \method.
\vspace{-2mm}

\section{Experiments}
\vspace{-2mm}
\label{sec:experiments}
\begin{table}[t]
	\centering
	\caption{
		Zero-shot Quantization accuracy [\%] of ResNet-20 (R-20) on CIFAR-10 and CIFAR-100, and ResNet-18 (R-20), ResNet-50 (R-50), and MobileNetV2 (MV2) on ImageNet.
		WBAB indicates that both weights and activations are quantized to Bbit.
		Note that \method achieves the highest accuracy.
    }
   	\vspace{-2mm}
	\renewcommand{\arraystretch}{1.4}
	\resizebox{\linewidth}{!}{
		\begin{tabular}{l
            >{\centering\arraybackslash}m{1cm}>{\centering\arraybackslash}m{1cm}
            >{\centering\arraybackslash}m{1cm}>{\centering\arraybackslash}m{1cm}
            >{\centering\arraybackslash}m{1cm}>{\centering\arraybackslash}m{1cm}
            >{\centering\arraybackslash}m{1cm}>{\centering\arraybackslash}m{1cm}
            >{\centering\arraybackslash}m{1cm}>{\centering\arraybackslash}m{1cm}}
			\hline
            \toprule
			\multirow{2}[3]{*}{Method} & \multicolumn{2}{c}{R-20 (CIFAR-10)} & \multicolumn{2}{c}{R-20 (CIFAR-100)} & \multicolumn{2}{c}{R-18 (ImageNet)} & \multicolumn{2}{c}{R-50 (ImageNet)} & \multicolumn{2}{c}{MV2 (ImageNet)} \\
            \cmidrule(lr){2-3} \cmidrule(lr){4-5} \cmidrule(lr){6-7} \cmidrule(lr){8-9} \cmidrule(lr){10-11}
            & W4A4 & W3A3 & W4A4 & W3A3 & W4A4 &  W3A3 & W4A4 & W3A3 & W4A4 & W3A3 \\
            \midrule
            Full Precision (W32A32) & \multicolumn{2}{c}{93.89} & \multicolumn{2}{c}{70.33} & \multicolumn{2}{c}{71.47} & \multicolumn{2}{c}{77.73} & \multicolumn{2}{c}{73.03} \\
            \midrule
			GDFQ \citep{GDFQ} &
                    90.11 & 75.11 &
                    63.75 & 47.61 &
                    60.60 & 20.23 &
                    54.16 & 0.31 &
                    59.43 & 1.46 \\
			ARC \citep{ARC} &
                    88.55 &	- &
                    62.76 & 40.15 &
                    61.32 & 23.37 &
                    64.37 & 1.63 &
                    60.13 & 14.30 \\
			Qimera \citep{Qimera} &
                    91.26 & 74.43 &
                    65.10 & 46.13 &
                    63.84 & 1.17 &
                    66.25 & - &
                    61.62 & - \\
			ARC + AIT \citep{AIT} &
                    90.49 & - &
                    61.05 & 41.34 &
                    65.73 & - &
                    68.27 & - &
                    66.47 & - \\
			IntraQ \citep{IntraQ} &
                    91.49 & 77.07 &
                    64.98 & 48.25 &
                    66.47 & 45.51 &
                    - & - &
                    65.10 & - \\
			AdaSG \citep{AdaSG} &
                    92.10 & 84.14 &
                    66.42 & 52.76 &
                    66.50 & 37.04 &
                    68.58 & 16.98 &
                    65.15 & 26.90 \\
			AdaDFQ \citep{AdaDFQ} &
                    92.31 & 84.89 &
                    66.81 & 52.74 &
                    66.53 & 38.10 &
                    68.38 & 17.63 &
                    65.41 & 28.99 \\
			HAST \citep{HAST} &
                    92.36 & 86.34 &
                    66.68 & 55.67 &
                    66.91 & 42.58 &
                    - & - &
                    65.60 & - \\
            TexQ \citep{TexQ} &
                    \underline{92.68} & 86.47 &
                    \underline{67.18} & 55.87 &
                    \underline{67.73} & \underline{50.28} &
                    \underline{70.72} & \underline{25.27} &
                    \underline{67.07} & \underline{32.80} \\
                PLF \citep{PLF} &
                    92.47 & \underline{88.04} &
                    66.94 & \underline{57.03} &
                    67.02 & - &
                    68.97 & - &
                    - & - \\
			\midrule
			\textbf{\method (Proposed)} &
                    \textbf{92.76} & \textbf{88.11} &
                    \textbf{67.34} & \textbf{57.28} &
                    \textbf{67.90} & \textbf{52.02} &
                    \textbf{71.05} & \textbf{26.89} &
                    \textbf{67.27} & \textbf{34.21} \\
                Standard Deviation &
                    {$\pm$ 0.10} & {$\pm$ 0.15} &
                    {$\pm$ 0.15} & {$\pm$ 0.29} &
                    {$\pm$ 0.19} & {$\pm$ 0.34} &
                    {$\pm$ 0.17} & {$\pm$ 0.24} &
                    {$\pm$ 0.21} & {$\pm$ 0.27} \\
			\bottomrule
                \hline
		\end{tabular}
	}
	\label{tab:q1}
    \vspace{-4mm}
\end{table}

We perform experiments to answer the following questions about \method. Further discussions and experiments on \method are discussed in Appendix \ref{appendix:discussion}.

\begin{itemize}[leftmargin=7mm]
    \item[\textbf{Q1.}] \textbf{Accuracy in Convolutional Neural Network (CNN) Quantization (Section \ref{subsec:q1}).}
        How accurate is the quantized CNN model from \method compared to those from existing ZSQ   methods?
    \item[\textbf{Q2.}] \textbf{Accuracy in Vision Transformer (ViT) Quantization (Section \ref{subsec:q2}).}
       How effective is \method in enhancing ViT Quantization performance?
    \item[\textbf{Q3.}] \textbf{Analysis on Class Activation Map Techniques (Section \ref{subsec:q3}).}
        Which CAM technique demonstrates the highest performance?
    \item[\textbf{Q4.}] \textbf{Ablation Study (Section \ref{subsec:q4}).}
        Are all components of \method effective for enhancing the classification accuracy of the quantized model?
    \item[\textbf{Q5.}] \textbf{Hyperparameter Analysis (Section \ref{subsec:q5}).}
        How robust are the performance gains by \method in hyperparameters $\lambda_{CE}$, $\lambda_{CAM}$, $D_0$, and $\tau$?
\end{itemize}
\vspace{-2mm}
\subsection{Experimental Setup}
\label{subsec:setup}
We briefly introduce the experimental setup.
Further setups are detailed in Appendix \ref{appendix:setup}.

\smallsection{Setup}
We evaluate our method across three datasets by reporting the top-1 accuracy for the validation sets of CIFAR-10, CIFAR-100~\citep{CIFAR} and ImageNet (ILSVRC 2012)~\citep{ImageNet} datasets.
We select ResNet-20~\citep{ResNet} model for CIFAR-10 and CIFAR-100, and ResNet-18, ResNet-50~\citep{ResNet}, and MobileNetV2~\citep{MobileNetV2} model for ImageNet.
We follow this prevalent experimental setup from existing works~\citep{TexQ, AdaDFQ, AdaSG} to correctly compare the performance of \method.

\smallsection{Competitors}
We compare \method with existing ZSQ methods utilizing synthetic dataset, including GDFQ~\citep{GDFQ}, ARC~\citep{ARC}, Qimera~\citep{Qimera}, ARC + AIT~\citep{AIT}, IntraQ~\citep{IntraQ}, AdaSG~\citep{AdaSG}, AdaDFQ~\citep{AdaDFQ}, HAST~\citep{HAST}, TexQ~\citep{TexQ}, and PLF~\citep{PLF}.
Both model weights and activation are quantized identically for all layers.

\smallsection{Implementation Details}
We follow the settings from IntraQ~\citep{IntraQ} and HAST~\citep{HAST} for equal comparison.
We generate 5,120 images with a batch size of 256.
The batch size for fine-tuning is 256 for CIFAR-10/100 and 16 for ImageNet with epochs uniformly set to 100.
We search $\tau$, $D_0$, $\lambda_{CE}$, and $\lambda_{CAM}$ within the ranges \{0.5, 0.55, 0.6, 0.65, 0.7\}, \{20, 40, 60, 80, 100\}, \{0.005, 0.05, 0.5, 5\}, and \{20, 50, 100, 200, 300, 500, 2000\}, respectively.
All of our experiments were done at a workstation with Intel Xeon Silver 4214 and RTX 3090.

\vspace{-1mm}
\subsection{Accuracy in CNN Quantization (Q1)}
\label{subsec:q1}

We evaluate the quantization accuracy of \method against existing ZSQ methods using CIFAR-10, CIFAR-100, and ImageNet datasets.
Our method significantly enhances quantized model accuracy on all settings with 3bit and 4bit quantization as summarized in Table~\ref{tab:q1}.
We report the mean and standard deviation of 5 iterations, each using different random seed values.
We have two observations from the result.
First, \method benefits the fine-tuning of quantized models consistently across diverse quantization bits, models, and datasets.
Compared to state-of-the-art methods TexQ~\citep{TexQ} and PLF~\citep{PLF}, \method achieves higher accuracies of up to 1.74\%p (ResNet-18 on ImageNet dataset).
Second, \method demonstrates increasing effectiveness as bit-width decreases.
Considering that lower-bit quantization is inherently more challenging, our results clearly showcase the robustness of \method due to its effective fine-tuning that overcomes the aforementioned limitations.

\begin{table}[t]
	\centering
	\caption{Zero-shot Quantization accuracy [\%] of ViT models on ImageNet dataset.
		WBAB indicates that both weights and activations are quantized to Bbit.
		Note that \method shows consistent improvements in quantization performance across various models.}
	\vspace{-2mm}
	\centering
	\renewcommand{\arraystretch}{1.3}
	\resizebox{\linewidth}{!}{
		\begin{tabular}{
				c
				l
				>{\centering\arraybackslash}m{2cm}
				>{\centering\arraybackslash}m{2cm}
				>{\centering\arraybackslash}m{2cm}
				>{\centering\arraybackslash}m{2cm}
				>{\centering\arraybackslash}m{2cm}}
			\hline
			\toprule
			Bits & Method & DeiT-Tiny & DeiT-Small & Swin-Tiny & Swin-Small & Average \\
			\midrule
			\multicolumn{2}{c}{Full Precision} & 72.21 & 79.85 & 81.35 & 83.20 & 79.15 \\
			\midrule
			\multirow{2}{*}{W4A8}& PSAQ-ViT~\citep{PSAQ-ViT} & 65.57 $\pm$ 0.10 & 72.04 $\pm$ 0.19 & 69.78 $\pm$ 1.67 & 75.03 $\pm$ 0.63 & 70.61 \\
			& \textbf{\method (Proposed)} & \textbf{65.90 $\pm$ 0.07} & \textbf{72.28 $\pm$ 0.34} & \textbf{70.76 $\pm$ 1.61} & \textbf{75.82 $\pm$ 0.54} & \textbf{71.19} \\
			\midrule
			\multirow{2}{*}{W8A8} & PSAQ-ViT~\citep{PSAQ-ViT} & 71.56 $\pm$ 0.03 & 75.97 $\pm$ 0.20 & 73.54 $\pm$ 1.61 & 76.68 $\pm$ 0.53 & 74.44 \\
			& \textbf{\method (Proposed)} & \textbf{71.74 $\pm$ 0.03} & \textbf{76.16 $\pm$ 0.29} & \textbf{74.11 $\pm$ 1.82} & \textbf{77.32 $\pm$ 0.59} & \textbf{74.83} \\
			\bottomrule
			\hline
		\end{tabular}
	}
	\label{tab:vit}
	\vspace{-4mm}
\end{table}

\vspace{-1mm}
\subsection{Accuracy in ViT Quantization (Q2)}
\label{subsec:q2}

We investigate the effectiveness of \method in enhancing ZSQ performance for Vision Transformers (ViTs).
Table~\ref{tab:vit} shows the ZSQ precision of four ViT models, DeiT-Tiny, DeiT-Small~\citep{DeiT}, Swin-Tiny, and Swin-Small~\citep{Swin} pre-trained on ImageNet dataset.
\method enhances the quantization precision across various models, achieving up to 0.58\%p increase in average precision when applied to the recent method PSAQ-ViT~\citep{PSAQ-ViT}.
The results show that \method is an accurate ZSQ method not only tailored for CNN but also is effective in ViTs.

\begin{wrapfigure}[11]{R}{3.3cm}
    \vspace{-12mm}
    \centering
    \includegraphics[width=3.3cm]{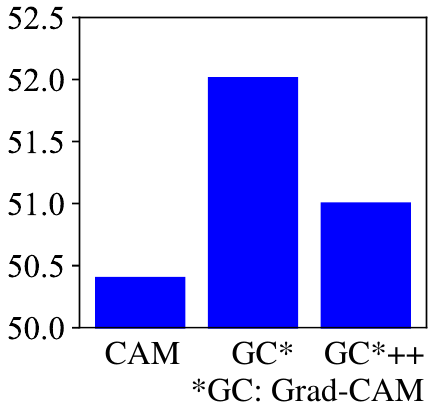}
    \caption{
        ZSQ accuracy comparison on different CAM techniques. See Section~\ref{subsec:q2} for details.}
    \label{fig:cam_search}
\end{wrapfigure}

\subsection{Analysis on Class Activation Map Techniques (Q3)}
\label{subsec:q3}

We compare the quantization accuracy of \method when utilizing different techniques to output the class activation map.
We show the 3bit quantization accuracy of ResNet-18 model in Figure \ref{fig:cam_search}.
Grad-CAM \citep{GradCAM} demonstrates higher performance over CAM \citep{CAM} and Grad-CAM++ \citep{GradCAM++}.
This is attributed to Grad-CAM++ being specialized in localizing multiple objects, whereas Grad-CAM focuses on a single object.
Additionally, note that Grad-CAM also takes advantage over CAM in that it is a direct generalization of CAM which is applicable only to models with a global pooling layer.
Thus, we utilize Grad-CAM to generate the saliency map for the CAM alignment loss $\loss_{CAM}$, as described in Section \ref{subsec:4.4}.


\begin{wraptable}[11]{R}{4cm}
    \vspace{-11mm}
    \centering
    \caption{Ablation study on the main ideas of \method.
        All ideas contribute to the improved performance.}
        \vspace{-2mm}
        \resizebox{4cm}{!}{
        \begin{tabular}{cccc}
        	\hline
            \toprule
            I1 & I2 & I3 & Accuracy [\%] \\
            \midrule
            \multicolumn{3}{c}{Baseline} & 43.63 \\	
            \midrule
            \gcmark & & & 49.43 \\	
            & \gcmark & & 48.26 \\	
            & & \gcmark & 46.42 \\	
            \gcmark & \gcmark & & 51.24 \\	
            \gcmark & & \gcmark & 50.81 \\	
            & \gcmark & \gcmark & 50.06 \\	
            \gcmark & \gcmark & \gcmark & \textbf{52.02} \\	
            \bottomrule
            \hline
        \end{tabular}
            }
    \label{tab:ablation}
\end{wraptable}


\subsection{Ablation Study (Q4)}
\label{subsec:q4}

We perform an ablation study to show that each main idea of \method, such as low-pass filter (I1) in Section \ref{subsec:4.2}, alignment of class activation map (I2) in Section \ref{subsec:4.3}, and soft labels for difficult samples (I3) in Section \ref{subsec:4.4}, improves the classification accuracy of the compressed model.
We summarize the 3bit quantization results of ResNet-18 model on ImageNet dataset in Table~\ref{tab:ablation}.
Note that the baseline denotes HAST~\citep{HAST} with layer-wise batch normalization loss from TexQ~\citep{TexQ} as detailed in Appendix~\ref{appendix:setup}.
Our analysis shows that all proposed ideas contribute to improved performance, with low-pass filter (I1) having the strongest impact of 5.80\%p.

\subsection{Hyperparameter Analysis (Q5)}
\label{subsec:q5}

We analyze the robustness of \method concerning the newly introduced hyperparameters $\lambda_{CE}$, $\lambda_{CAM}$, $D_0$, and $\tau$ in Figure \ref{fig:hyp}.
We report the 3bit quantization accuracy for the ResNet-18 model trained on the ImageNet dataset.
We have three observations from the result.
First, as shown in Figure~\ref{fig:hyp}(a), the classification accuracy remains robust across a range of $\lambda_{CE}$ and $\lambda_{CAM}$ values.
This robustness indicates that \method remains effective even when these hyperparameters are not precisely tuned.
Second, Figure~\ref{fig:hyp}(b) illustrates the effect of varying the difficulty threshold $\tau$.
Note that the classification accuracy increases as $\tau$ increases from 0 to 0.5, since too low $\tau$ excludes many useful samples for cross-entropy training.
However, the classification accuracy starts to decrease as $\tau$ becomes greater than 0.5, since it allows to use difficult and ambiguous samples for cross-entropy training.
We observe that the $\tau$ value of 0.5 gives the best trade-off between using more samples and not using ambiguous samples.
We further conduct a deeper analysis on $\tau$ in Appendix~\ref{app:subsec:tau_analysis}, verifying its impact on different settings.
Third, Figure~\ref{fig:hyp}(c) shows that an appropriate balance in $D_0$ is necessary to maintain performance.
Extremely low $D_0$ values result in significant performance degradation due to excessive filtering, which oversmooths the images and results in the loss of crucial information.
Overall, \method consistently outperforms baselines across a diverse range of hyperparameter values.


\begin{figure}
	\vspace{-1mm}
	\centering
	\includegraphics[width=\linewidth]{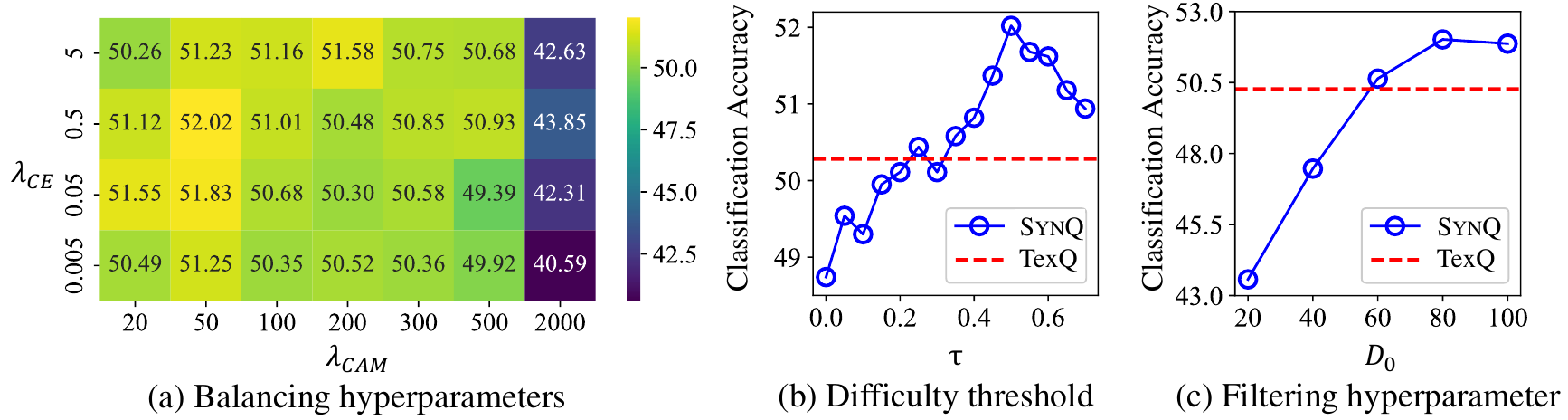}
	\vspace{-6mm}
	\caption{Hyperparameter analysis on (a) balancing hyperparameters $\lambda_{CE}$ and $ \lambda_{CAM}$, (b) difficulty threshold $\tau$, and (c) filtering hyperparameter $D_0$. See Section \ref{subsec:q4} for details.}
	\label{fig:hyp}
	\vspace{-3mm}
\end{figure}


\section{Related Work}
\vspace{-2mm}
\label{sec:related}
\vspace{1mm}
\smallsection{Network Quantization}
Network quantization reduces the computational complexity and memory footprint of deep neural networks by converting the weights, activations, or both from full precision to lower precision formats~\citep{SongHanSurvey, AdaQuant, SQuant, PTQ4DM, DMLabSurvey}.
Quantization significantly speeds up inference and reduces power consumption, enabling deployment on edge devices with limited resources.
Recent advancements in network quantization include Quantization-Aware Training (QAT)~\citep{IntArithOnly, TowardEff, QLoRA, QALoRA} and Post-Training Quantization (PTQ)~\citep{BRECQ, OPTQ, FDDA, MrBiQ}.
QAT integrates quantization into training, enabling the model to learn weights robust to quantization noise, thereby maintaining higher accuracy.
On the other hand, PTQ quantizes pre-trained models using calibration to minimize accuracy loss without original training data.
Furthermore, advanced strategies such as mixed-precision quantization~\citep{OneMixed}, knowledge distillation \citep{StoPre}, adaptive quantization~\citep{AdaptiveQuant}, weight sharing~\citep{SoftWeightSharing}, parameter reparameterization~\citep{RepQViT} and hardware-awareness~\citep{HAQ} have shown promising results in achieving a balance between model efficiency and performance.
However, existing works require real data to directly train or calibrate the quantized model.
In contrast, \method focuses on QAT scenarios where there is no access to real data.

\smallsection{Zero-shot Quantization}
Zero-shot Quantization (ZSQ)~\citep{ZeroQ}, also called as data-free quantization~\citep{DFQ, DAFL, DFQ-AKD}, performs quantization without the need for accessing the training data of full-precision models.
Earlier methods focused on calibrating model parameters solely based on model properties without acquiring any data~\citep{PT4Bit, SQuant}.
Unfortunately, these methods resulted in significant performance drops at lower bit widths such as 3bit or 4bit quantization~\citep{GDFQ, IntraQ}.
Recent studies generate synthetic datasets and fine-tune the quantized model to enhance performance~\citep{IL, Qimera, ZAQ, IntraQ}.
GDFQ \citep{GDFQ} first employs generative methods leveraging batch normalization statistics and extra category label information.
Numerous variants have developed the field by introducing techniques such as advanced generators~\citep{ARC}, boundary supporting samples~\citep{Qimera}, noise optimization~\citep{ZeroQ}, diversified samples~\citep{DSG}, intra-class heterogeneity~\citep{ZeroQ}, hard sample generation~\citep{HAST}, texture feature calibration~\citep{TexQ}, and pseudo-label filtering~\citep{PLF}.
Recently, several works further advances ZSQ into Vision Transformers~\citep{PSAQ-ViT, PSAQ-ViTv2, CLAMP-ViT}.
However, existing methods continue to struggle with the three primary challenges (see Section~\ref{subsec:4.1}).
In contrast, \method tackles these challenges with three main ideas: low-pass filter, class activation map alignment, and soft labels for difficult samples.

\vspace{-2mm}
\section{Conclusion}
\vspace{-2mm}
\label{sec:conclusion}
We propose \textbf{\method} (\methodfullbold), an accurate Zero-shot Quantization (ZSQ) method that effectively addresses the three major limitations of fine-tuning with synthetic datasets: 1) noise in the synthetic dataset, 2) predictions based on off-target patterns, and the 3) misguidance by erroneous hard labels.
We exploit a low-pass filter to minimize noise, align the class activation map to ensure prediction from correct image region, and leverage soft labels on difficult samples to avoid misguidance by erroneous hard labels.
\method consistently outperforms existing ZSQ methods across diverse models, quantization bits, and datasets.
Future works include extending our method into settings such as object detection and diffusion models. 

\clearpage
\section*{Acknowledgments}
This work was supported by Institute of Information \& communications Technology Planning \& Evaluation (IITP) grant funded by the Korea government (MSIT)
[No.RS-2020-II200894, Flexible and Efficient Model Compression Method for Various Applications and Environments],
[No.RS-2021-II211343, Artificial Intelligence Graduate School Program (Seoul National University)],
and [No.RS-2021-II212068, Artificial Intelligence Innovation Hub (Artificial Intelligence Institute, Seoul National University)].
This work was supported by Youlchon Foundation.
The Institute of Engineering Research at Seoul National University provided research facilities for this work.
The ICT at Seoul National University provides research facilities for this study.
U Kang is the corresponding author.

\bibliography{main}
\bibliographystyle{iclr2025_conference}

\appendix
\newpage

\section{Notation}
\label{appendix:notation}
We summarize the frequently used notations in the paper as Table \ref{app:tab:notation}.

\begin{table}[ht]
	\centering
	\caption{Frequently used notations.}
		\begin{tabular}{cl}
			\toprule
			\textbf{Symbol} & \textbf{Description} \\ 
			\midrule

			$\theta$ 
			& A pre-trained model \\

			$\theta^q$ 
			& The quantized model \\


			\midrule

			$\{\mathbf{x}_i\}_{i=1}^{N}$ 
			& Synthetic samples \\

			$\{\mathbf{y}_i\}_{i=1}^{N}$ 
			& One-hot encoded labels of synthetic samples \\

			$ q(\mathbf{x}_i;\theta) $ 
			& Probability distribution of a sample $\mathbf{x}_i$ predicted by parameters $\theta$ \\

			$ \delta(\mathbf{x}_i;\theta) $ 
			& Difficulty of a sample $\mathbf{x}_i$ predicted by parameters $\theta$\\

			$ KL(\cdot||\cdot) $ 
			& KL divergence loss \\ 

			$ CE(\cdot, \cdot) $ 
			& Cross-entropy loss \\

			\midrule

			$ \mathcal{F} $ 
			& Fourier transform function \\

			$ \mathbf{G} $ 
			& Gaussian low-pass filter \\

			$ D_0 $ 
			& Filtering hyperparameter \\

			$ \lambda_{CAM} $ 
			& Balancing hyperparameter of CAM loss \\

			$ \lambda_{CE} $ 
			& Balancing hyperparameter of CE loss \\

			$ \tau $ 
			& Threshold of difficulty for cross-entropy loss \\
			
			\bottomrule
		\end{tabular}
	\label{app:tab:notation}
\end{table}


\section{Algorithm}
\label{appendix:algorithm}
We describe the quantization procedure of \method in Algorithm~\ref{alg:method}.
Note that any technique for generating synthetic datasets is applicable.

\begin{algorithm}[H]
	\caption{Quantization procedure of \method}
	\label{alg:method}
	\begin{algorithmic}[1]
		\Statex \hspace*{-\algorithmicindent} \textbf{Input:} the pre-trained model with parameters $\theta$, hyperparameters $n_{ep}$, $D_0$, $\lambda_{CAM}$, $\lambda_{CE}$, and $\tau$.
		\Statex \hspace*{-\algorithmicindent}
		\textbf{Output:} the parameters $\theta^q$ of the quantized model.
		\Statex \textbf{/** Step 1: Generate synthetic dataset **/}
		\State Initialize the synthetic dataset $\{\mathbf{x}_i\}_{i=1}^{N}$ with Gaussian noise.
		\State Randomly assign labels $\{\mathbf{y}_i\}_{i=1}^{N}$ for synthetic dataset.
		\State Optimize $\{\mathbf{x}_i\}_{i=1}^{N}$ to minimize $\mathcal{L}_{IL} + \alpha \mathcal{L}_{BNS}$. \Comment{Compatible with any synthetic dataset}
		\Statex
		\Statex \textbf{/** Step 2: Fine-tune quantized model **/}
		\State Initialize $\theta^q$ following the round-to-nearest scheme.
		\State Apply a low-pass Gaussian filter with a cut-off frequency $D_0$
		\Statex to synthetic samples and obtain $\{\mathbf{x}_i^F\}_{i=1}^{N}$. \Comment{Idea 1: Low-pass filter}
		\For{each epoch in $[1, \ldots, n_{ep}]$}
		\State Initialize the total loss $\mathcal{L}$ to zero.
		\For{$i$ in $[1, \ldots, N]$}
		\State Perform forward pass of $\theta$ and $\theta^q$ with synthetic sample $\mathbf{x}_i^F$.
		\State Compare gradients and calculate the CAM loss $\mathcal{L}_{CAM}$. \Comment{Idea 2: CAM alignment}		
		\State $\mathcal{L} \leftarrow \mathcal{L} + KL(q(\mathbf{x}_i; \theta) || q(\mathbf{x}_i; \theta^q)) + \lambda_{CAM} \mathcal{L}_{CAM}$
		\State Calculate $ \delta(\mathbf{x}_i; \theta) $
		\If{$\delta(\mathbf{x}_i; \theta) \leq \tau$} \Comment{Idea 3: Soft labels for difficult samples}
		\State $\mathcal{L} \leftarrow \mathcal{L} + \lambda_{CE} CE(q(\mathbf{x}_i; \theta^q), \mathbf{y}_i)$
		\EndIf
		\EndFor
		\State Update $\theta^q$ to minimize $\mathcal{L}$.
		\EndFor
		\State \Return $\theta^q$
	\end{algorithmic}
\end{algorithm}

\section{Further Discussion and Experiments}
\label{appendix:discussion}
\subsection{Proof of Theorem~\ref{thm:thm1}}
\label{app:subsec:time_complexity}

We provide the proof of Theorem~\ref{thm:thm1} as below:

\begin{proof}
	We investigate the time complexity of \method in three steps: synthetic dataset generation, low-pass filter application, and quantized model fine-tuning.
	First, synthetic dataset generation involves calculating Inception Loss $\loss_{IL}$ and Batch Normalization Statistics Loss $\loss_{BNS}$ for $N$ samples, each requiring a forward pass through the model, resulting in complexity of $O(N T_{\theta})$.
	
	Second, applying the low-pass filter $\mathbf{G}$ involves a Fourier transform $\mathcal{F}(\cdot)$, an element-wise multiplication $\odot$, and an inverse Fourier transform $\mathcal{F}^{-1}(\cdot)$ (see Equation~\ref{eq:eq4}).
	Implementing with Fast Fourier Transform(FFT), the time complexity for a single input $\mathbf{x}$ with size of $Z \times Z$ is $O(Z \log{Z})$, $O(Z)$, and $O(Z\log{Z}))$ is for $\mathcal{F}(\mathbf{x})$, $\mathbf{G} \odot \mathcal{F}(\mathbf{x})$, and $\mathcal{F}(\mathbf{G} \odot \mathcal{F}(\mathbf{x}))$, respectively~\citep{Cooley-Tukey}.
	Therefore, the time complexity of this step is $O(NZ\log{Z})$ for $N$ samples.
	
	Lastly, the fine-tuning step involves calculating the loss for $N$ filtered samples, resulting in complexity of $O(N (T_{\theta} + L \cdot T_{\theta}))$.
	Here, $T_{\theta}$ represents the complexity of computing the cross-entropy and KL divergence losses, and $L \cdot T_{\theta}$ represents the complexity of computing the Grad-CAM loss $\loss_{CAM}$ across $L$ layers.
	Generating saliency maps $S^{\theta}(\mathbf{x})$ for a single input $\mathbf{x}$ for all $L$ layers using Grad-CAM requires one forward pass and $L$ backward passes, thereby requiring $O(NLT_{\theta})$ for $N$ samples~\citep{GradCAM}.
	Thus, the complexity of computing $\loss_{CAM}$ (Equation~\ref{eq:eq5}) is $O(NLT_{\theta})$ because aligning the saliency maps $\mathbf{S}^{\theta}(\mathbf{x}_i)$ and $\mathbf{S}^{\theta^q}(\mathbf{x}_i)$ is negligible compared to model inference time $T_{\theta}$.
	In summary, this step is simplified to $O(N L T_{\theta})$.
	
	Combining these complexities, we get:
	\[
	O(N T_{\theta} + NZ\log{Z} + N L T_{\theta}) = O\big(NLT_{\theta} \big) \quad (\because~T_{\theta} \gg Z \log{Z}).
	\]
\end{proof}
\vspace{-5mm}

\subsection{Runtime Analysis}
\label{app:subsec:runtime_analysis}

\begin{wrapfigure}[11]{R}{4cm}
	\vspace{-11mm}
	\centering
	\includegraphics[width=4cm]{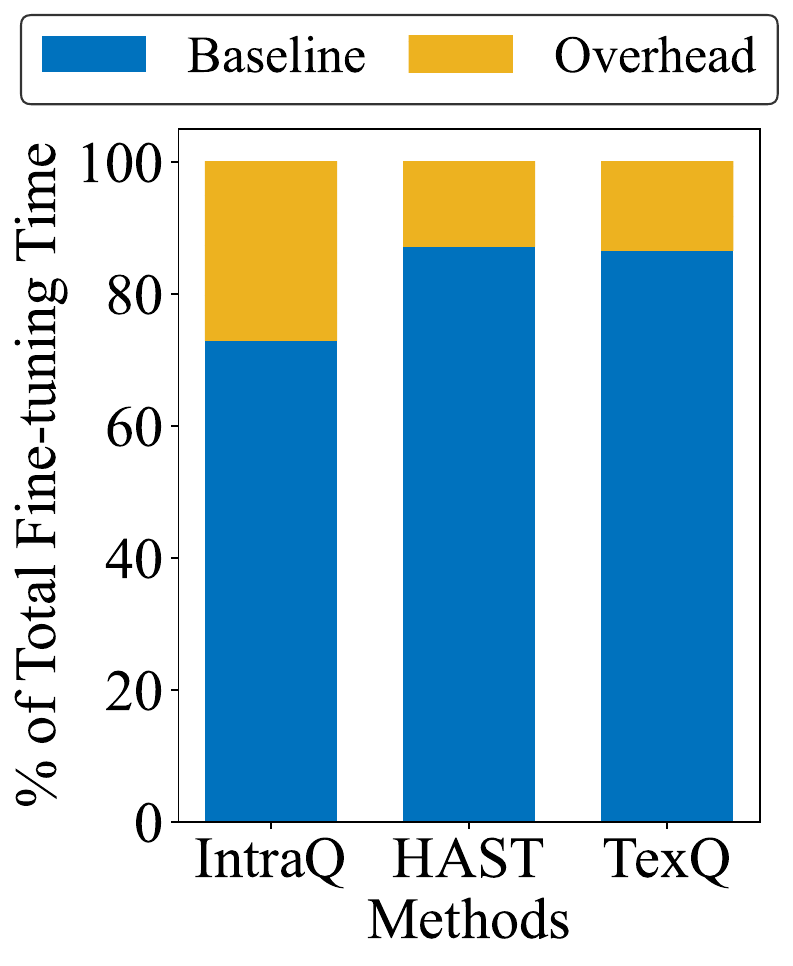}
	\vspace{-7mm}
	\caption{
		Runtime analysis of \method.
		The overhead from \method is marginal.
		%
	}
	\label{app:fig:runtime_analysis}
\end{wrapfigure}

We perform a runtime analysis to investigate the computational overhead introduced by \method.
For this, we measure the difference in fine-tuning time between the baseline methods with and without \method.
For fair comparison, we compare only with noise optimization methods since they do not train a generator model while fine-tuning.
Figure~\ref{app:fig:runtime_analysis} depicts the relative contribution of baseline methods to the per-epoch fine-tuning time compared to the approach where \method is added to the baseline method for three baselines, IntraQ~\citep{IntraQ}, HAST~\citep{HAST}, and TexQ~\citep{TexQ}.
Note that the overhead from \method is marginal, i.e., the added time takes only 17.81\% of the total time in average.
Thus, \method improves adopted models with minimal sacrifice of quantization time.

\subsection{Prevalent Noise in the Synthetic Dataset}
\label{app:subsec:noise_synthetic}

In Section~\ref{subsec:4.2} and Figure~\ref{fig:amplitude}, we empirically analyze the limitation of the exiting ZSQ approaches, i.e., their synthetic datasets contain more high-frequency components compared to real image datasets, clearly indicating a higher level of noise.
We investigate this limitation across various ZSQ methods and datasets to demonstrate that it is a widespread issue, not confined to specific scenarios.
Figures~\ref{app:fig:noise_real} and~\ref{app:fig:noise} show the amplitude distribution of real and synthetic datasets, respectively.
We compare three real datasets CIFAR-10, CIFAR-100, and ImageNet with the corresponding synthetic dataset produced by three baseline methods, IntraQ~\citep{IntraQ} (first row), HAST~\citep{HAST} (third row), and TexQ~\citep{TexQ} (fifth row).
We then apply a low-pass filter ($D_0=50$) to mitigate the observed noise.
As shown in the figures, the discrepancy in amplitude distribution is observed regardless of the baseline method or dataset.
This is effectively mitigated by exploiting the filter that removes high-frequency noise, thereby leading to enhanced quantization performance.
In summary, both the limitation of noise in synthetic dataset and effect of low-pass filter (Section~\ref{subsec:4.2}) are evident in a large variety of settings.

\begin{figure}[t]
	\centering
	\begin{subfigure}[t]{0.31\linewidth}
		\centering
		\includegraphics[width=\linewidth]{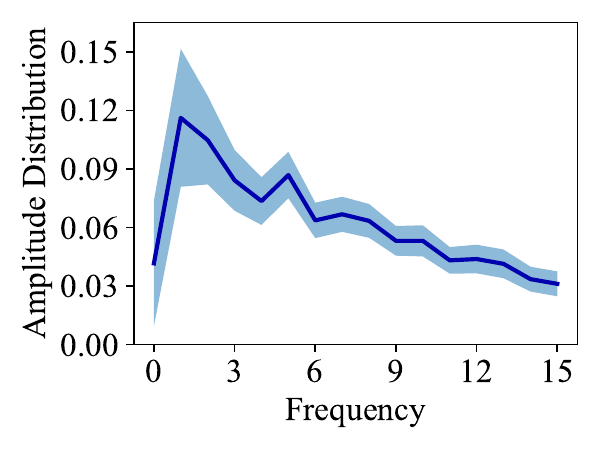}
		\appfignoisecaptionvspace
		\caption{CIFAR-10}
		\label{app:fig:noise:cifar10}
	\end{subfigure}
	\hspace{3mm}%
	\begin{subfigure}[t]{0.31\linewidth}
		\centering
		\includegraphics[width=\linewidth]{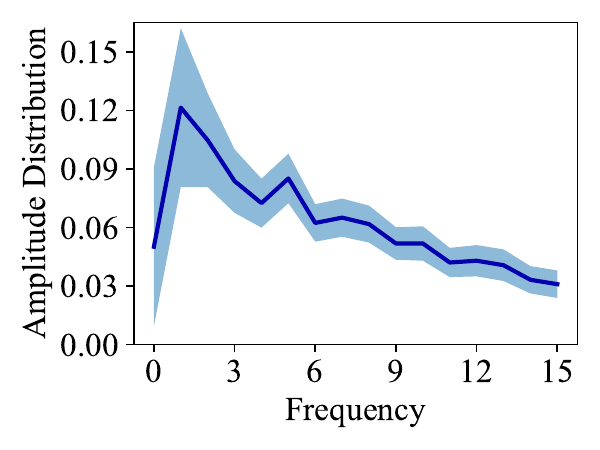}
		\appfignoisecaptionvspace
		\caption{CIFAR-100}
		\label{app:fig:noise:cifar100}
	\end{subfigure}%
	\hspace{3mm}%
	\begin{subfigure}[t]{0.31\linewidth}
		\centering
		\includegraphics[width=\linewidth]{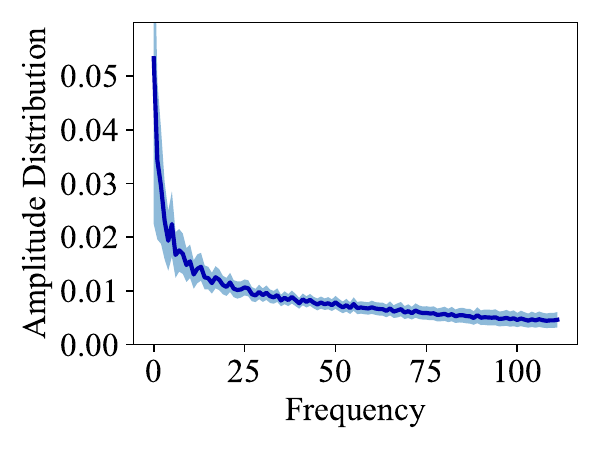}
		\appfignoisecaptionvspace
		\caption{ImageNet}
		\label{app:fig:noise:imgnet}
	\end{subfigure}
	%
	\caption{
		Amplitude distribution of various real image datasets.
		See Appendix~\ref{app:subsec:noise_synthetic} for details.
	}
	\label{app:fig:noise_real}
	\vspace{-2mm}
\end{figure}

\begin{figure}[htbp]
	\centering
	\begin{subfigure}[t]{0.31\linewidth}
		\centering
		\includegraphics[width=\linewidth]{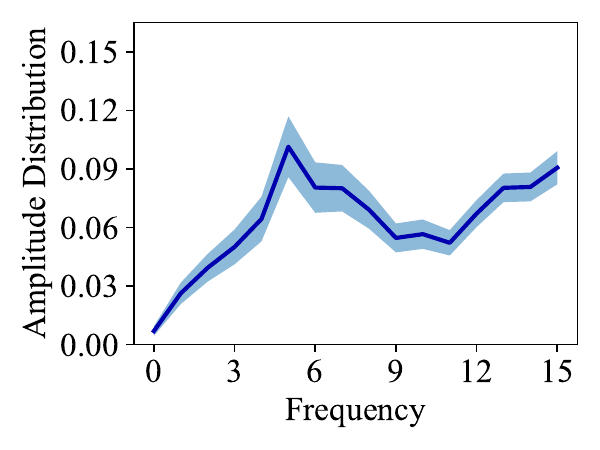}
		\appfignoisecaptionvspace
		\caption{IntraQ + CIFAR-10}
		\label{app:fig:noise:intraq_cifar10}
	\end{subfigure}
	\hspace{3mm}%
	\begin{subfigure}[t]{0.31\linewidth}
		\centering
		\includegraphics[width=\linewidth]{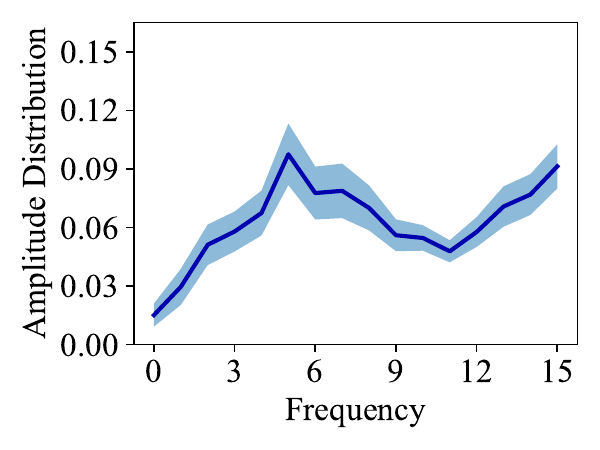}
		\appfignoisecaptionvspace
		\caption{IntraQ + CIFAR-100}
		\label{app:fig:noise:intraq_cifar100}
	\end{subfigure}%
	\hspace{3mm}%
	\begin{subfigure}[t]{0.31\linewidth}
		\centering
		\includegraphics[width=\linewidth]{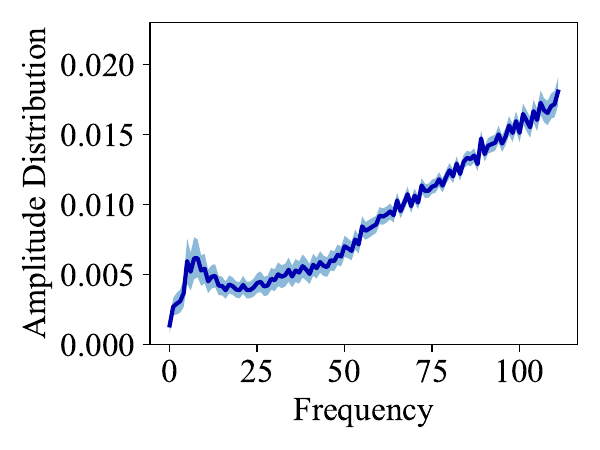}
		\appfignoisecaptionvspace
		\caption{IntraQ + ImageNet}
		\label{app:fig:noise:intraq_imgnet}
	\end{subfigure}
	\vspace{1mm} 
	\begin{subfigure}[t]{0.31\linewidth}
		\centering
		\includegraphics[width=\linewidth]{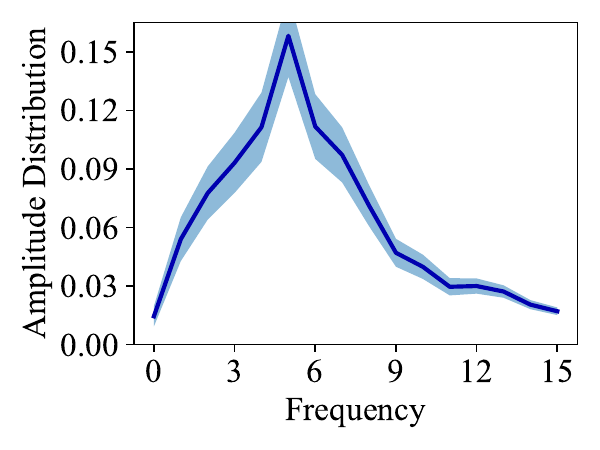}
		\appfignoisecaptionvspace
		\caption{IntraQ + CIFAR-10 (filtered)}
		\label{app:fig:noise:intraq_cifar10_filtered}
	\end{subfigure}
	\hspace{3mm}%
	\begin{subfigure}[t]{0.31\linewidth}
		\centering
		\includegraphics[width=\linewidth]{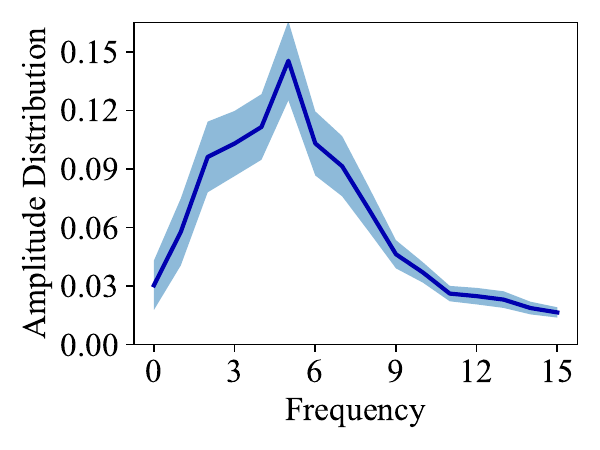}
		\appfignoisecaptionvspace
		\caption{IntraQ + CIFAR-100 (filtered)}
		\label{app:fig:noise:intraq_cifar100_filtered}
	\end{subfigure}%
	\hspace{3mm}%
	\begin{subfigure}[t]{0.31\linewidth}
		\centering
		\includegraphics[width=\linewidth]{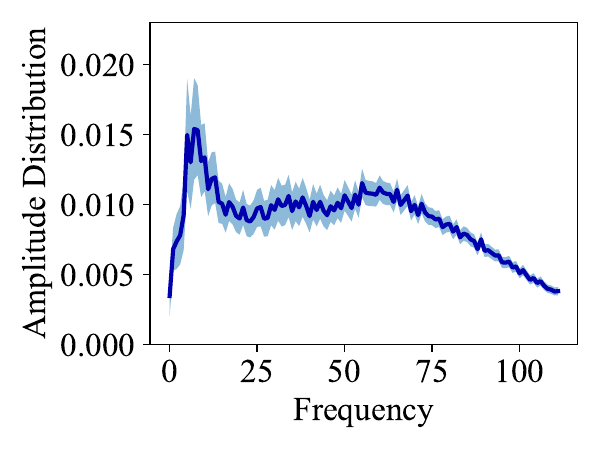}
		\appfignoisecaptionvspace
		\caption{IntraQ + ImageNet (filtered)}
		\label{app:fig:noise:intraq_imgnet_filtered}
	\end{subfigure}
	\vspace{0.1mm}
	\hrule
	\vspace{0.1mm}
	\begin{subfigure}[t]{0.31\linewidth}
		\centering
		\includegraphics[width=\linewidth]{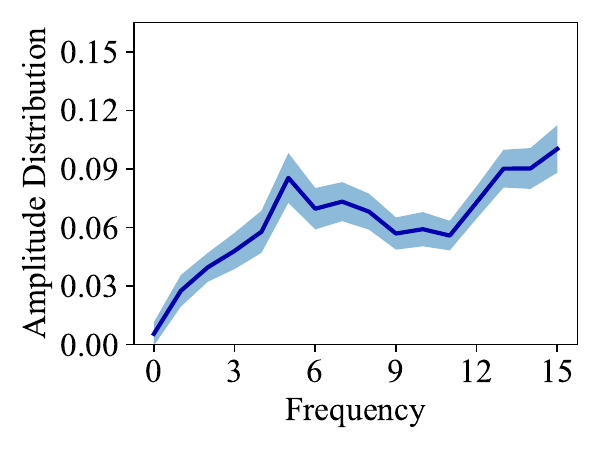}
		\appfignoisecaptionvspace
		\caption{HAST + CIFAR-10}
		\label{app:fig:noise:hast_cifar10}
	\end{subfigure}
	\hspace{3mm}%
	\begin{subfigure}[t]{0.31\linewidth}
		\centering
		\includegraphics[width=\linewidth]{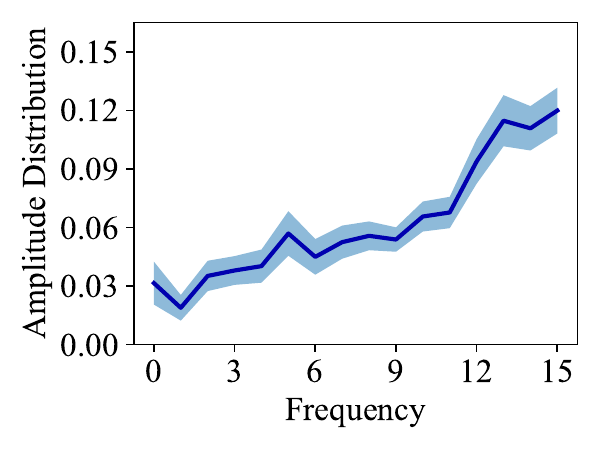}
		\appfignoisecaptionvspace
		\caption{HAST + CIFAR-100}
		\label{app:fig:noise:hast_cifar100}
	\end{subfigure}%
	\hspace{3mm}%
	\begin{subfigure}[t]{0.31\linewidth}
		\centering
		\includegraphics[width=\linewidth]{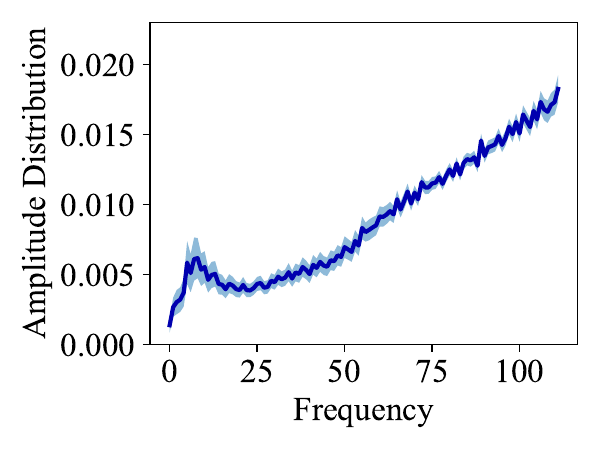}
		\appfignoisecaptionvspace
		\caption{HAST + ImageNet}
		\label{app:fig:noise:hast_imgnet}
	\end{subfigure}
	\vspace{1mm} 
	\begin{subfigure}[t]{0.31\linewidth}
		\centering
		\includegraphics[width=\linewidth]{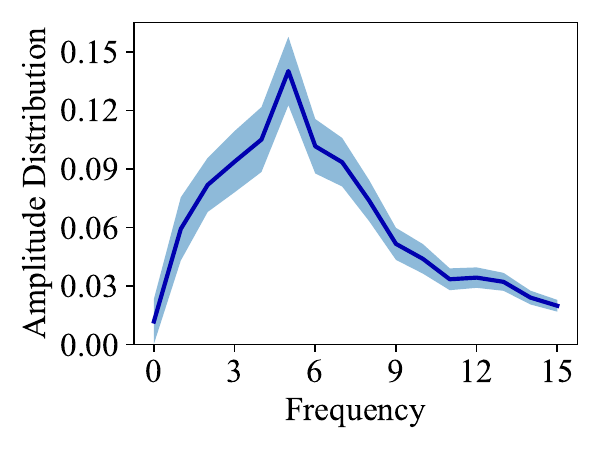}
		\appfignoisecaptionvspace
		\caption{HAST + CIFAR-10 (filtered)}
		\label{app:fig:noise:hast_cifar10_filtered}
	\end{subfigure}
	\hspace{3mm}%
	\begin{subfigure}[t]{0.31\linewidth}
		\centering
		\includegraphics[width=\linewidth]{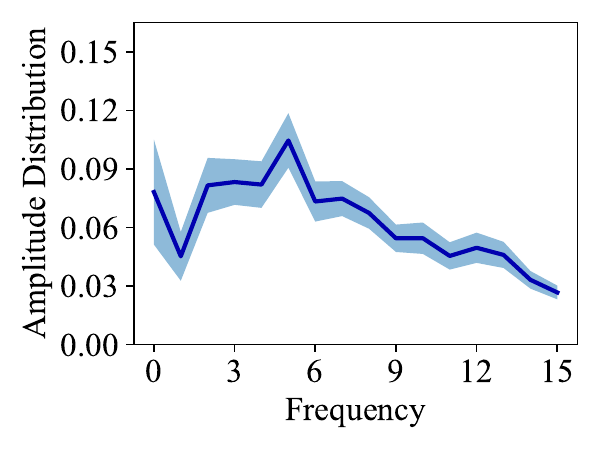}
		\appfignoisecaptionvspace
		\caption{HAST + CIFAR-100 (filtered)}
		\label{app:fig:noise:hast_cifar100_filtered}
	\end{subfigure}%
	\hspace{3mm}%
	\begin{subfigure}[t]{0.31\linewidth}
		\centering
		\includegraphics[width=\linewidth]{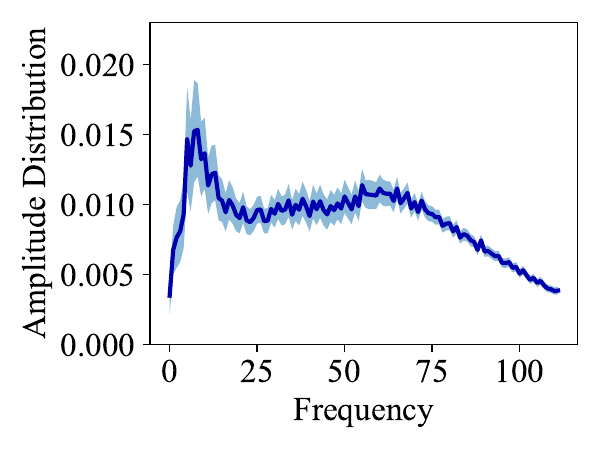}
		\appfignoisecaptionvspace
		\caption{HAST + ImageNet (filtered)}
		\label{app:fig:noise:hast_imgnet_filtered}
	\end{subfigure}
	\vspace{0.1mm}
	\hrule
	\vspace{0.1mm}
	\begin{subfigure}[t]{0.31\linewidth}
		\centering
		\includegraphics[width=\linewidth]{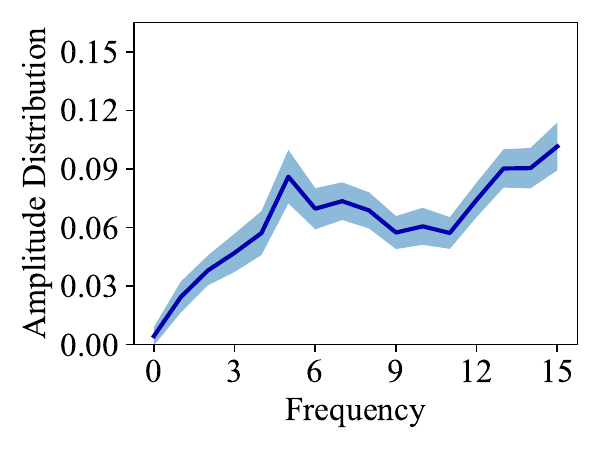}
		\appfignoisecaptionvspace
		\caption{TexQ + CIFAR-10}
		\label{app:fig:noise:texq_cifar10}
	\end{subfigure}
	\hspace{3mm}%
	\begin{subfigure}[t]{0.31\linewidth}
		\centering
		\includegraphics[width=\linewidth]{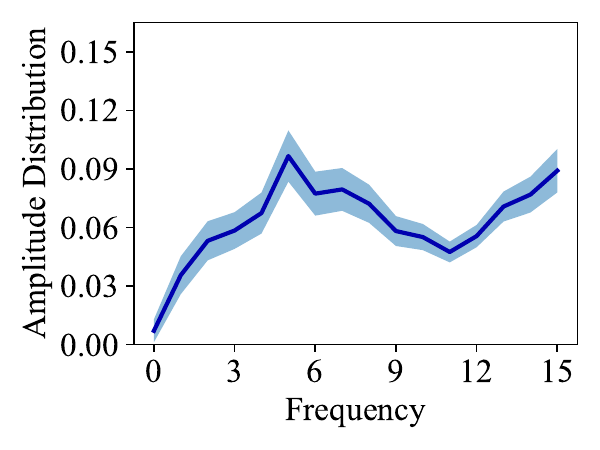}
		\appfignoisecaptionvspace
		\caption{TexQ + CIFAR-100}
		\label{app:fig:noise:texq_cifar100}
	\end{subfigure}%
	\hspace{3mm}%
	\begin{subfigure}[t]{0.31\linewidth}
		\centering
		\includegraphics[width=\linewidth]{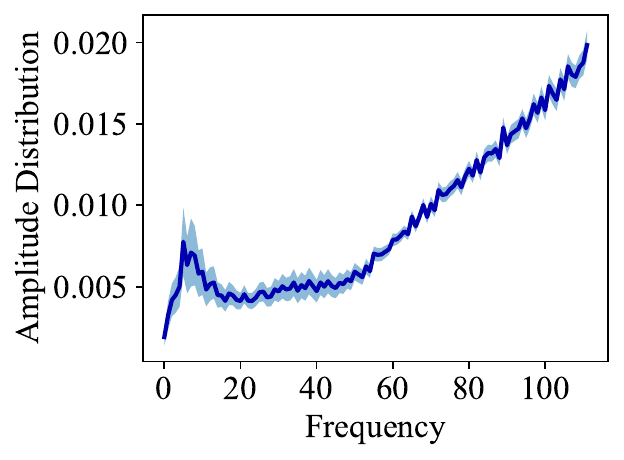}
		\appfignoisecaptionvspace
		\caption{TexQ + ImageNet}
		\label{app:fig:noise:texq_imgnet}
	\end{subfigure}
	\vspace{1mm} 
	\begin{subfigure}[t]{0.31\linewidth}
		\centering
		\includegraphics[width=\linewidth]{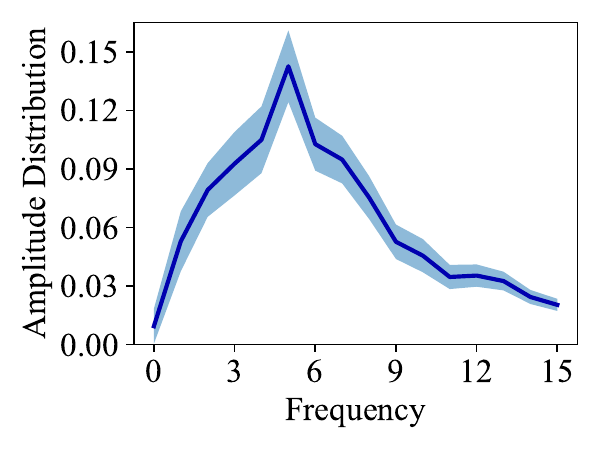}
		\appfignoisecaptionvspace
		\caption{TexQ + CIFAR-10 (filtered)}
		\label{app:fig:noise:texq_cifar10_filtered}
	\end{subfigure}
	\hspace{3mm}%
	\begin{subfigure}[t]{0.31\linewidth}
		\centering
		\includegraphics[width=\linewidth]{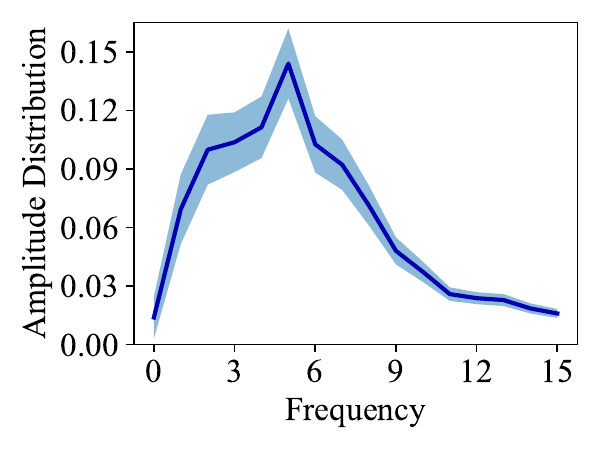}
		\appfignoisecaptionvspace
		\caption{TexQ + CIFAR-100 (filtered)}
		\label{app:fig:noise:texq_cifar100_filtered}
	\end{subfigure}%
	\hspace{3mm}%
	\begin{subfigure}[t]{0.31\linewidth}
		\centering
		\includegraphics[width=\linewidth]{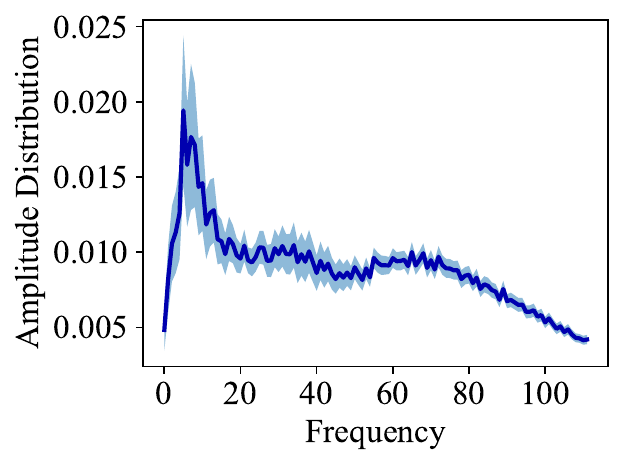}
		\appfignoisecaptionvspace
		\caption{TexQ + ImageNet (filtered)}
		\label{app:fig:noise:texq_imgnet_filtered}
	\end{subfigure}
	\vspace{-3mm}
	\caption{
		Amplitude distribution of various synthetic datasets.
		See Appendix~\ref{app:subsec:noise_synthetic} for details.
	}
	\label{app:fig:noise}
\end{figure}

\subsection{Further Analysis on CAM Pattern Discrepancy}
\label{app:subsec:cam_discrepancy}

The observation of "predictions based on off-target patterns" from Figure~\ref{fig:cam} in Section~\ref{sec:observation} is intuitive and persuasive, but it is analyzed under limited conditions.
We explore the distance between saliency maps derived from Grad-CAM~\citep{GradCAM} to
1) validate this challenge across diverse methods and
2) demonstrate that it applies not only to a few selected images but also to the entire synthetic dataset on average.
Table~\ref{app:tab:cam_discrepancy} presents the average distance between the saliency maps of the pre-trained model (target) and the quantized models (prediction) trained with various methods and datasets.
We compute the KL divergence between the saliency maps of 3-bit quantized ResNet-18 models pre-trained on the ImageNet dataset, treating each saliency map as a distribution, and report the average distance across batches with size of 32.
From the result, we have three observations.
First, all three baseline methods demonstrate notable CAM discrepancies, emphasizing the generality of this challenge in the ZSQ domain.
Second, this challenge is significantly mitigated when using real datasets, with a reduction in distance exceeding 20\% compared to synthetic datasets.
This highlights that training with synthetic datasets exacerbates this problem.
Last, adopting \method significantly lowers the CAM discrepancy for all baseline methods, achieving a reduction of approximately 16-18\% compared to the baseline.
The resulting discrepancy is comparable to that of training with real datasets.
Overall, the challenge of CAM pattern discrepancy 1) is evident across multiple methods and 2) is notably reduced by CAM alignment of \method.

\begin{table}[t]
	\centering
	\caption{
		The average per-batch KL-divergence [$\times 10^{-2}$] between the saliency maps of the pre-trained and quantized models trained with different methods and datasets.
		See Section~\ref{app:subsec:cam_discrepancy} for details.
	}
	\label{app:tab:cam_discrepancy}
	\renewcommand{\arraystretch}{1.3}
	\arrayrulecolor{black}
	\setlength{\arrayrulewidth}{0.95pt}
	\resizebox{\linewidth}{!}{
		\begin{tabular}{l
				>{\centering\arraybackslash}m{3.5cm}
				>{\centering\arraybackslash}m{3.5cm}
				>{\centering\arraybackslash}m{3.5cm}}
			\hline
			\toprule
			\multirow{2}[3]{*}{Method} & \multirow{2}[3]{*}{Real dataset} & \multicolumn{2}{c}{Synthetic dataset}\\
			\cmidrule(lr){3-4}
			& & Baseline & \textbf{+ \method} \\
			\midrule
			IntraQ~\citep{IntraQ} & 3.1973 & 4.0251 & \textbf{3.2891 (-18.29\%)}  \\
			\midrule
			HAST~\citep{HAST} & 3.0976 & 3.9867 & \textbf{3.3133 (-16.89\%)}\\
			\midrule
			TexQ~\citep{TexQ} & 2.9952 & 3.8436 & \textbf{3.1542 (-17.94\%)}\\
			\bottomrule
			\hline
		\end{tabular}
	}
\end{table}

\subsection{Comparison between CAM Alignment and Feature Alignment}
\label{app:subsec:comparison_alignment}
\begin{wraptable}[12]{R}{5cm}
	\vspace{-4.5mm}
	\centering
	\caption{Ablation study of two alignment techniques.
		CAM alignment shows superior performance.}
	\resizebox{5cm}{!}{
		\begin{tabular}{cccc}
			\hline
			\toprule
			FA & I2 & I1 \& I3 & Accuracy [\%] \\
			\midrule
			\multicolumn{3}{c}{Baseline} & 43.63 \\	
			\midrule
			\gcmark & & & 46.77 $\pm$ 0.30 \\	
			& \gcmark & & \textbf{48.26} $\pm$ 0.29 \\	
			\midrule
			\gcmark & & \gcmark & 51.20 $\pm$ 0.30 \\	
			& \gcmark & \gcmark & \textbf{52.02} $\pm$ 0.34 \\	
			\bottomrule
			\hline
		\end{tabular}
	}
	\label{app:tab:align}
\end{wraptable}

We compare CAM (Class Activation Map) alignment of our proposed \method and feature alignment of HAST~\citep{HAST} to mitigate possible misunderstandings and highlight the novelty of the proposed idea.
The main difference between CAM alignment and feature alignment lies in their focus on different aspects of the model's behavior.
Compared to activation maps that show the response of the model to the given input, CAM emphasizes the region of the model related to the model’s prediction, highlighting the most relevant features that contribute to the final decision.
This is because CAM is defined based on the magnitude of the gradient with respect to the cross-entropy between the prediction and the label.
Considering that simple fine-tuning methods unintentionally lead the quantized model to rely on incorrect image patterns for predictions, CAM alignment shows clear advantage over feature alignment.
By aligning the saliency maps between the original and quantized models, we ensure that the critical predictive regions remain consistent, thereby preserving the interpretability and accuracy of the model’s decisions, which is more effective than merely matching activation maps.

We conduct an ablation study to compare the performance.
Table~\ref{app:tab:align} reports the 3bit ZSQ accuracy of ResNet-18 model on ImageNet dataset when applying CAM alignment (I2) and feature alignment (FA).
CAM alignment shows clear advantage in performance over feature alignment both with and without other ideas (I1 \& I3) of \method.

Furthermore, we compare the computational overhead of two alignments.
They have the same time complexity since both maps are obtained by applying backpropagation through the network.
In practice, the average training time (in seconds) of the ResNet-18 model per epoch is 113.40 $\pm$ 2.28 seconds and 113.27 $\pm$ 2.34  seconds for CAM alignment and feature alignment, respectively.
Regarding training time, the gap between two methods is negligible.
Overall, CAM alignment directly targets the second challenge (prediction based on off-target patterns), thereby showing notable performance enhancement with similar training time compared to the feature alignment of HAST.

\subsection{Application on Different Baselines}
\label{app:subsec:add_baselines}
We evaluate the ZSQ performance when applying \method on different baselines to investigate the adaptability of the proposed method.
Specifically, we select three generator-based baselines (GDFQ~\citep{GDFQ}, Qimera~\citep{Qimera}, and AdaDFQ~\citep{AdaDFQ}) and three noise optimization baselines (IntraQ~\citep{IntraQ}, HAST~\citep{HAST}, and TexQ~\citep{TexQ}).
While generator-based methods simultaneously train the generator and quantized model, noise optimization methods generate the synthetic dataset first and fine-tune with it afterwards.
Table~\ref{app:tab:baselines} reports the accuracy of quantized models and the percent point improvement ("Imp.").
\method consistently enhances the ZSQ performance of for all baseline methods, specifically up to 31.17\%p.
The increasing effectiveness in lower bit-width experiments highlights the superiority of \method.
In overall, \method is powerful and versatile since it is easily integrated with any ZSQ method utilizing synthetic dataset, enhancing their performance across various settings.

\begin{table}[t]
	\centering
	\caption{
		Comparison of ZSQ accuracy [\%] of the ResNet-18 model pre-trained on the ImageNet dataset, before and after applying \method.
		Regardless of the baseline method and quantization bits, \method consistently improves the ZSQ accuracy.
	}
	\vspace{-2mm}
	\label{app:tab:baselines}
	\renewcommand{\arraystretch}{1.6}
	\arrayrulecolor{black}
	\setlength{\arrayrulewidth}{0.95pt}
	\resizebox{\linewidth}{!}{
		\begin{tabular}{cl
				>{\centering\arraybackslash}m{1.5cm}>{\centering\arraybackslash}m{1.8cm}>{\centering\arraybackslash}m{1.5cm}
				>{\centering\arraybackslash}m{1.5cm}>{\centering\arraybackslash}m{1.8cm}>{\centering\arraybackslash}m{1.5cm}}
			\hline
			\toprule
			\multirow{2}[3]{*}{Type} & \multirow{2}[3]{*}{Method} & \multicolumn{3}{c}{W3A3} & \multicolumn{3}{c}{W4A4} \\
			\cmidrule(lr){3-5} \cmidrule(lr){6-8}
			& & Baseline & \textbf{+ \method} & Imp. &
			Baseline & \textbf{+ \method} & Imp. \\
			\midrule
			\multirow{3}[4]{*}{Generator-based}
			& GDFQ~\citep{GDFQ} &
			20.23 & \textbf{25.57} \small $\pm$ 0.28 & \textbf{5.34} &
			60.60 & \textbf{61.23} \small $\pm$ 0.21 & \textbf{0.63} \\
			\cmidrule(lr){2-8}
			& Qimera~\citep{Qimera} &
			1.17 & \textbf{32.34} \small $\pm$ 0.30 & \textbf{31.17} &
			63.84 & \textbf{64.28} \small $\pm$ 0.18 & \textbf{0.44} \\
			\cmidrule(lr){2-8}
			& AdaDFQ~\citep{AdaDFQ} &
			38.10 & \textbf{40.56} \small $\pm$ 0.28 & \textbf{2.46} &
			66.53 & \textbf{66.79} \small $\pm$ 0.22 & \textbf{0.26} \\
			\midrule
			\multirow{3}[4]{*}{Noise Optimization}
			& IntraQ~\citep{IntraQ} &
			45.51 & \textbf{50.44} \small $\pm$ 0.42 & \textbf{4.93} &
			66.47 & \textbf{66.73} \small $\pm$ 0.19 & \textbf{0.26} \\
			\cmidrule(lr){2-8}
			& HAST~\citep{HAST} &
			42.58 & \textbf{50.69} \small $\pm$ 0.38 & \textbf{8.11} &
			66.91 & \textbf{67.19} \small $\pm$ 0.21 & \textbf{0.28} \\
			\cmidrule(lr){2-8}
			& TexQ~\citep{TexQ} &
			50.28 & \textbf{51.58} \small $\pm$ 0.30 & \textbf{1.30} &
			67.73 & \textbf{67.85} \small $\pm$ 0.16 & \textbf{0.12} \\
			\bottomrule
			\hline
		\end{tabular}
	}
\end{table}

\subsection{Analysis on Zero-shot Post-Training Quantization Setting}
\label{app:subsec:application_genie}

In this paper, we mainly discover ZSQ under settings that additional fine-tuning of the quantized model is performed, namely \textit{Quantization-Aware Training (QAT)} setting.
However, a recent work, Genie~\citep{Genie} has explored ZSQ under \textit{Post-Training Quantization (PTQ)} setting, where no additional fine-tuning is needed.
In this section, we first briefly discuss the preliminaries on uniform quantization.
Then, we compare the settings of \method and Genie to mitigate possible misunderstandings and explain why Genie is neglected from our competitors in the main experiments (Table~\ref{tab:q1}).
Lastly, we integrate \method with Genie and evaluate its quantization performance to highlight the superior adaptability and broad applicability of \method.

\smallsection{Preliminaries on Uniform Quantization}
We describe the preliminaries on the uniform quantization scheme.
Uniform quantization is to represent the weight and activation of a higher-bit given network, within lower bit integers.
To perform $B$bit uniform quantization, we first linearly scale the distribution of weight matrix $\mathbf{W}$ within a range of $[-2^{B-1}, 2^{B-1}-1]$, then map weight values into equally divided integers following the rounding-to-nearest scheme~\citep{RTN}.
Given a matrix $\mathbf{W}$ with the size of quantization granularity, the $B$bit quantized matrix $\mathbf{W}^q$ by uniform quantization is calculated as shown in Equation~\ref{app:eq7}.
\begin{equation}
	\label{app:eq7}
	\mathbf{W}^q = \lfloor \frac{\mathbf{W}}{s}-z+\frac{1}{2} \rfloor,
	\quad \text{where}~~ s=\frac{\beta-\alpha}{2^{B}-1}
	~~,~~ z= \frac{\alpha}{s} + 2^{B-1},
\end{equation}
and $[\alpha, \beta]$ is the clipping range corresponding to $[-2^{B-1}, 2^{B-1}-1]$ in integer scale.
Properly choosing the clipping range $[\alpha, \beta]$ for $\mathbf{W}$ is essential, as it defines the scaling factor $s$ and zero-point $z$ required for accurate quantization.
A commonly used approach, known as \textit{Min-max Quantization}, involves setting $\alpha$ and $\beta$ to the minimum and maximum values of $\mathbf{W}$, respectively.
The key advantage of min-max quantization is its simplicity and effectiveness, requiring no calibration to define the clipping range.
This makes it the most essential and unbiased baseline for fair comparison among diverse quantization methods, especially under QAT setting.

However, min-max quantization is vulnerable to outliers, especially when quantizing activation.
Outliers significantly expand the range $[\alpha, \beta]$, leading to lower precision for the majority of values in $\mathbf{W}$ during quantization.
Parameterized clipping~\citep{PACT} mitigates this outlier effects by allowing the range to adapt based on the data distribution.
This method leverages a calibration dataset to identify clipping thresholds $\alpha$ and $\beta$ that best represent the data distribution for improved quantization precision.
Building on this approach, advanced techniques such as adaptive rounding~\citep{AdaRound}, learned step size~\citep{LSQ}, random dropping~\citep{QDrop}, block reconstruction~\citep{BRECQ}, and scale reparameterization~\citep{RepQViT} have been developed under PTQ settings, further enabling improved quantization without fine-tuning.


\smallsection{A Direct Comparison with Genie~\citep{Genie}}
We compare the settings between \method and Genie~\citep{Genie}.
Whereas \method optimizes the parameters $\theta^q$ of the quantized model in QAT, Genie follows a PTQ scheme, focusing on the scale factor $s$ and zero-point $z$.
To achieve this, Genie combines a joint optimization framework for PTQ (Genie-M) for $s$ and soft-bit $\mathbf{V}$ (refer to AdaRound~\citep{AdaRound} and Genie~\citep{Genie} for details) with advanced techniques such as LSQ~\citep{LSQ}, QDrop~\citep{QDrop}, and BRECQ~\citep{BRECQ}.
Moreover, Genie fixes the quantization bits of the first layer's weights and activation, as well as the last layer's activation, to 8 bits across all experiments.
On the other hand, \method and other QAT approaches that are listed in Table~\ref{tab:q1} uniformly assign bits across all layers, using min-max quantization as the baseline.
Due to the differences in quantization strategies and experimental conditions, evaluating Genie alongside zero-shot QAT methods is challenging.

\begin{table}[t]
	\centering
	\caption{
			Zero-shot Quantization accuracy [\%] of a ResNet-18 model on ImageNet quantized with Genie~\citep{Genie} as baseline. WPAQ indicates that weights and activations are quantized each into Pbit and Qbit, respectively. \method shows consistent improvements in quantization performance when applied to Genie, across various quantization bits.
	}
	\vspace{-1mm}
	\label{app:tab:genie_application}
	\renewcommand{\arraystretch}{1.5}
	\arrayrulecolor{black}
	\setlength{\arrayrulewidth}{0.95pt}
	\resizebox{\linewidth}{!}{
		\begin{tabular}{l
				>{\centering\arraybackslash}m{2.5cm}
				>{\centering\arraybackslash}m{2.5cm}
				>{\centering\arraybackslash}m{2.5cm}
				>{\centering\arraybackslash}m{2.5cm}
				>{\centering\arraybackslash}m{2cm}}
			\hline
			\toprule
			Method & W2A2 & W2A4 & W3A3 & W4A4 & Average \\
			\midrule
			Genie~\citep{Genie} & 54.01 & 65.10 & 66.84 & 69.66 & 63.90 \\
			\midrule
			\textbf{\hspace{1.3em} + \method (Proposed)} & \textbf{54.97 $\pm$ 0.35} & \textbf{65.88 $\pm$ 0.27} & \textbf{67.42 $\pm$ 0.21} & \textbf{69.88 $\pm$ 0.19} & \textbf{64.54}\\
			\bottomrule
			\hline
		\end{tabular}
	}
	\vspace{-1mm}
\end{table}

\smallsection{Accuracy in Zero-shot Post-Training Quantization}
While Genie’s PTQ framework does not support experiments under min-max quantization, our proposed method \method enables synthesis-aware fine-tuning for any ZSQ method that generates and utilizes synthetic datasets.
Consequently, we evaluate the ZSQ accuracy under Genie's setting both with and without \method in Table~\ref{app:tab:genie_application}.
We follow Genie for the experimental settings (see Table 3 and Appendix A of the Genie paper) and carry out implementation using their official code.
Note that the size of synthetic dataset is 1,024 for this experiment.
Applying \method leads to consistent gains in ZSQ accuracy for Genie across various bit settings, showing an average enhancement of 0.66\%p.
These results clearly demonstrate the superiority of \method, showcasing its compatibility with diverse quantization techniques other than min-max quantization.

\subsection{Analysis on the Robustness towards Noise}
\label{app:subsec:robustness_noise}
We validate the robustness of \method towards different types of noise.
Table~\ref{app:tab:robustness_noise} compares how TexQ and \method perform in quantization when four distinct noise types are introduced into their synthetic datasets.
Specifically, we report the 3bit ZSQ accuracy of a ResNet-18 model pre-trained on ImageNet dataset with four types of noise: Gaussian, speckle, Salt-and-Pepper (S \& P), and uniform~\citep{ImgProcessing}.
We have two observations from the result.
First, the low-pass filter effectively minimizes accuracy degradation across various noise types, surpassing the baseline in capacity.
Second, the effect of low-pass filter and the influence of noise both vary significantly depending on the noise type.
Thus, our future work involves tailoring the noise filtering approach to better handle specific noise types in synthetic datasets.

\begin{table}[t]
	\centering
	\caption{
		3bit ZSQ accuracy [\%] of a ResNet-18 model pre-trained on ImageNet dataset when four different types of noise is injected into its synthetic dataset.
		See Section~\ref{app:subsec:robustness_noise} for details.
	}
	\label{app:tab:robustness_noise}
	\renewcommand{\arraystretch}{1.2}
	\arrayrulecolor{black}
	\setlength{\arrayrulewidth}{0.95pt}
	\resizebox{\linewidth}{!}{
		\begin{tabular}{l
				>{\centering\arraybackslash}m{2.1cm}
				>{\centering\arraybackslash}m{2.1cm}
				>{\centering\arraybackslash}m{2.1cm}
				>{\centering\arraybackslash}m{2.1cm}
				>{\centering\arraybackslash}m{2.1cm}}
			\hline
			\toprule
			\multirow{2}[3]{*}{Method} & \multirow{2}[3]{*}{Baseline} & \multicolumn{4}{c}{Noise} \\
			\cmidrule(lr){3-6}			
			& & Gaussian & Speckle & S \& P & Uniform \\
			\midrule
			TexQ~\citep{TexQ} & 50.28 &
			33.01 (-34.35\%) &
			29.80 (-40.74\%) &
			40.82 (-18.81\%) &
			39.85 (-20.74\%) \\
			\midrule
			\textbf{\method (Proposed)} & \textbf{52.02} &
			\textbf{43.29 (-16.78\%)} &
			\textbf{35.62 (-31.53\%)} &
			\textbf{46.81 (-10.01\%)} &
			\textbf{45.11 (-13.28\%)} \\
			\bottomrule
			\hline
		\end{tabular}
	}
\end{table}

\subsection{Performance regarding the Size of Synthetic Dataset}
\label{app:subsec:size_syn_dataset}

\begin{wrapfigure}[16]{R}{4.7cm}
	\vspace{-5mm}
	\centering
	\includegraphics[width=4.7cm]{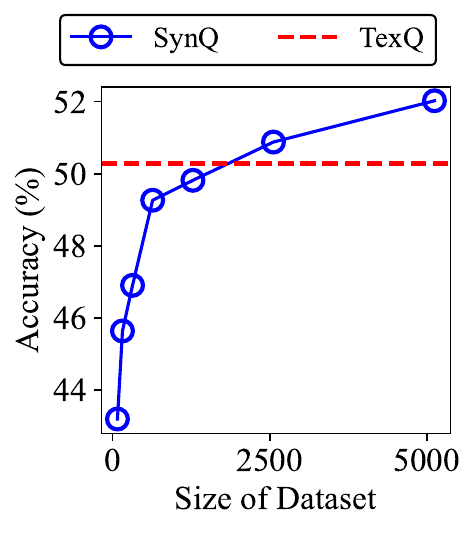}
	\vspace{-7mm}
	\caption{
		ZSQ accuracy regarding the size of synthetic dataset.
		See Appendix~\ref{app:subsec:size_syn_dataset} for details.
	}
	\label{app:fig:size_of_dataset}
\end{wrapfigure}

We analyze the performance variation of \method according to the size of synthetic dataset.
Figure~\ref{app:fig:size_of_dataset} shows the 3bit ZSQ accuracy of ResNet-18 model pre-trained on ImageNet dataset.
We report the performance while doubling the size of synthetic dataset used for training from 80 to 5,120 images.
Note that we compare the model performance trained with 5,120 samples for the main experiments (see Appendix~\ref{appendix:setup}).
We have two observations from the result.
First, \method achieves higher performance when trained with a greater number of images.
Although the incremental gains begin to drop near 1,000 images and onward, we expect that generating more than 5,120 synthetic images could achieve superior accuracy than the results reported in Table~\ref{tab:q1}.
Second, \method outperforms TexQ even when training with only half the dataset, demonstrating the effectiveness of synthesis-aware fine-tuning introduced by \method.
In overall, \method shows better performance compared to the baselines, with performance improving as the synthetic dataset size increases.

\subsection{Visualization of Synthetic Dataset}
\label{app:subsec:visual}

\begin{figure}[htbp]
	\centering
	\begin{subfigure}[t]{0.42\linewidth}
		\centering
		\includegraphics[width=\linewidth]{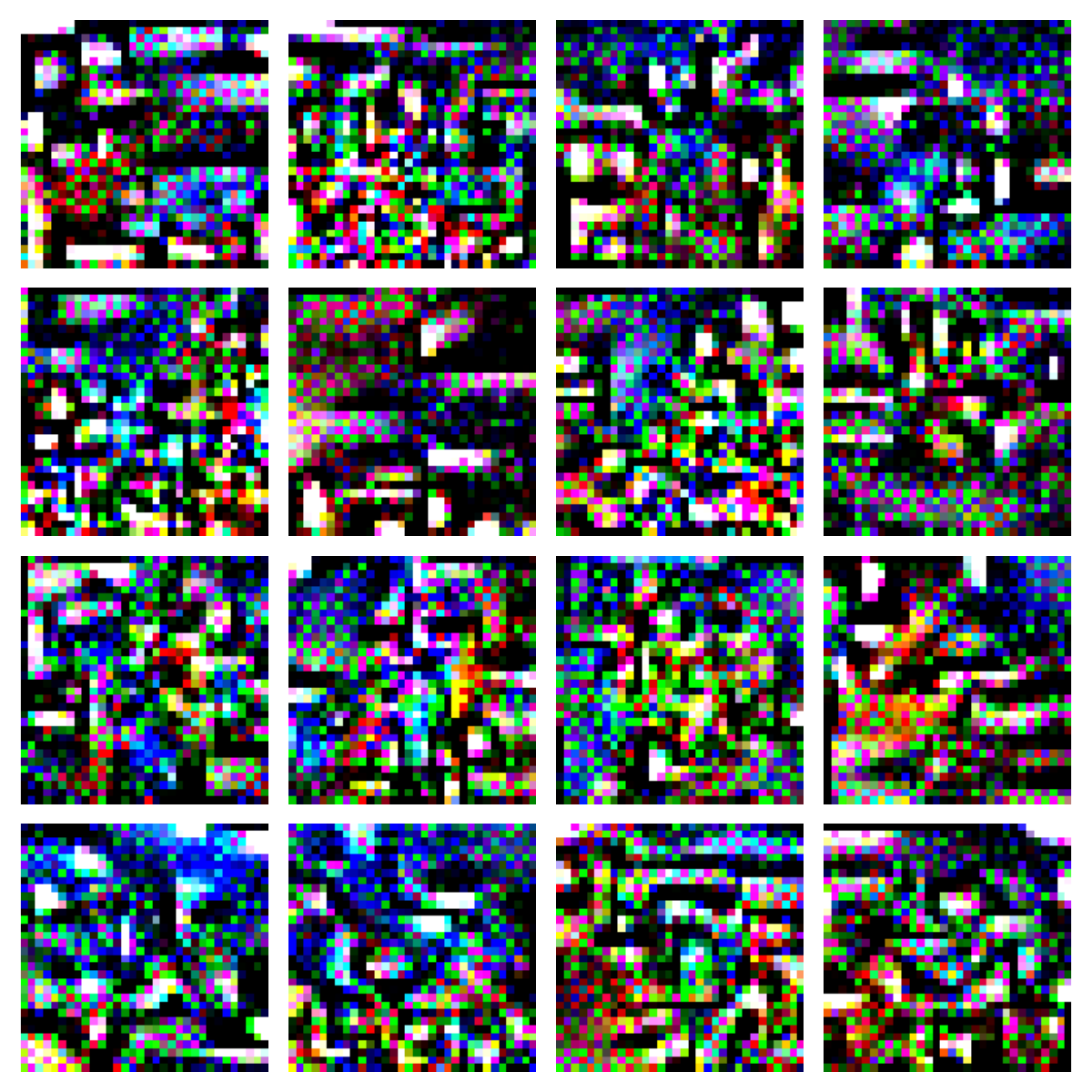}
		\vspace{-3mm}
		\caption{CIFAR-10}
		\label{app:fig:vis:cifar10}
	\end{subfigure}%
	\hspace{3mm}%
	\begin{subfigure}[t]{0.42\linewidth}
		\centering
		\includegraphics[width=\linewidth]{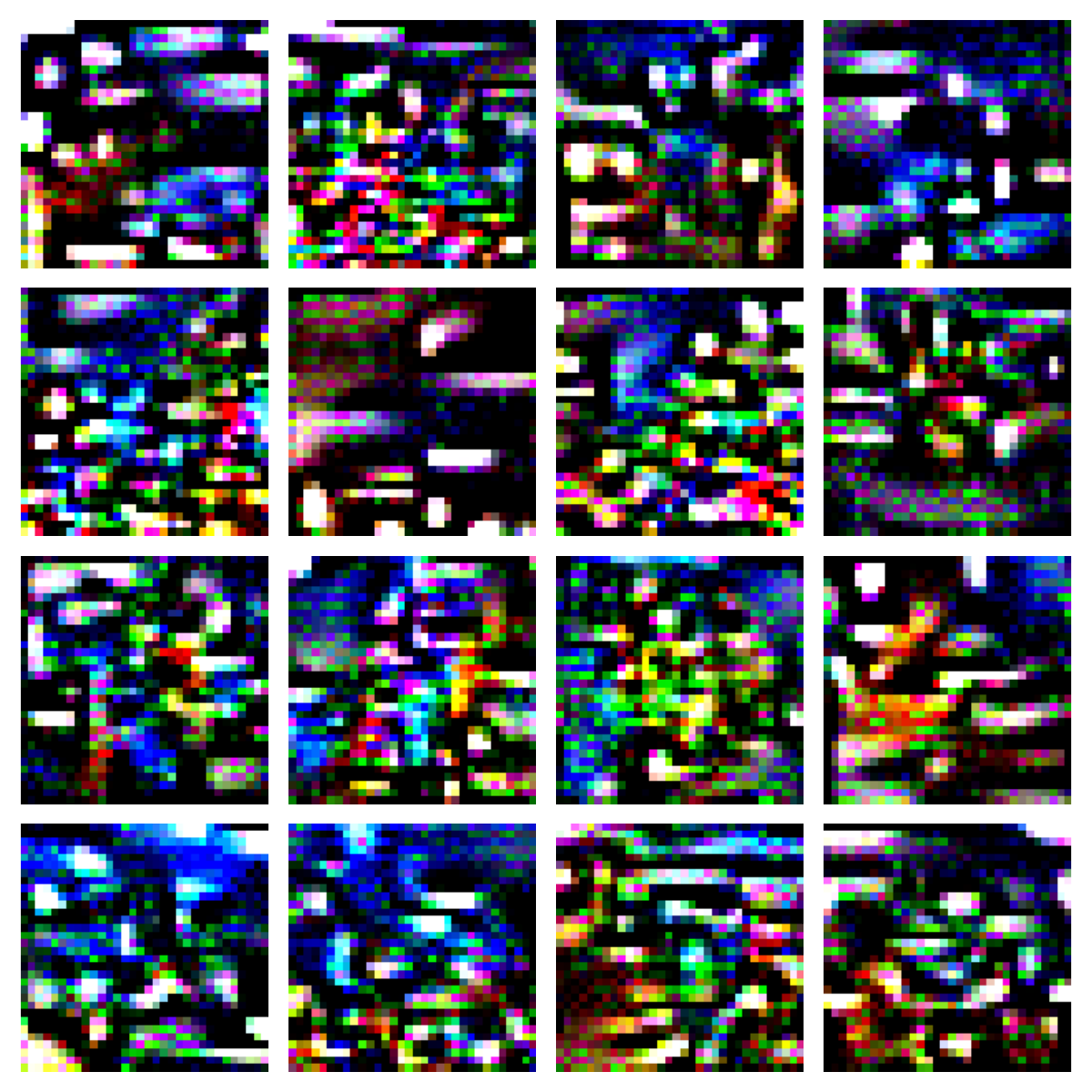}
		\vspace{-3mm}
		\caption{CIFAR-10 (filtered)}
		\label{app:fig:vis:cifar10_fil}
	\end{subfigure}
	
	\vspace{3mm} 
	
	\begin{subfigure}[t]{0.42\linewidth}
		\centering
		\includegraphics[width=\linewidth]{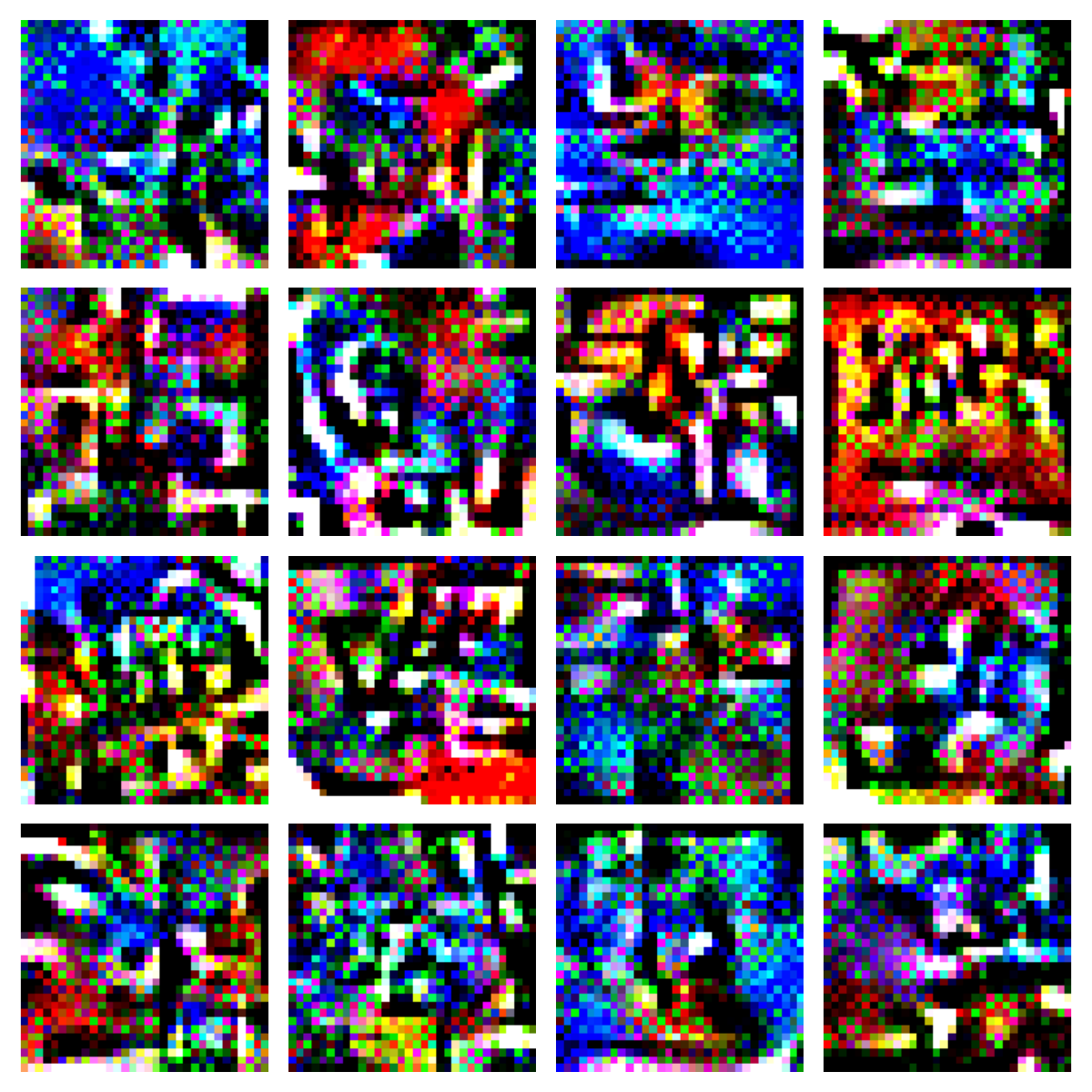}
		\vspace{-3mm}
		\caption{CIFAR-100}
		\label{app:fig:vis:cifar100}
	\end{subfigure}%
	\hspace{3mm}%
	\begin{subfigure}[t]{0.42\linewidth}
		\centering
		\includegraphics[width=\linewidth]{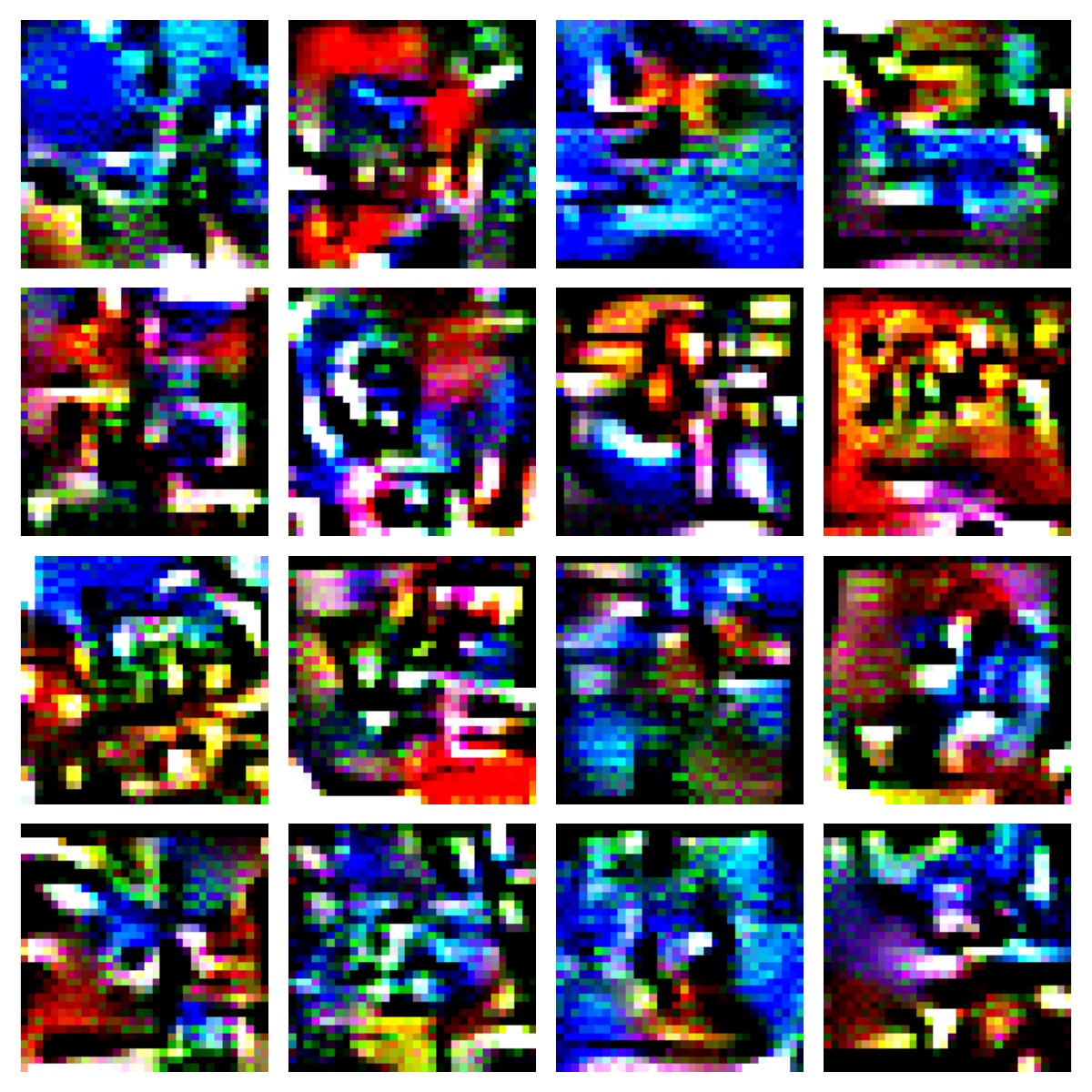}
		\vspace{-3mm}
		\caption{CIFAR-100 (filtered)}
		\label{app:fig:vis:cifar100_fil}
	\end{subfigure}
	
	\vspace{3mm} 
	
	\begin{subfigure}[t]{0.42\linewidth}
		\centering
		\includegraphics[width=\linewidth]{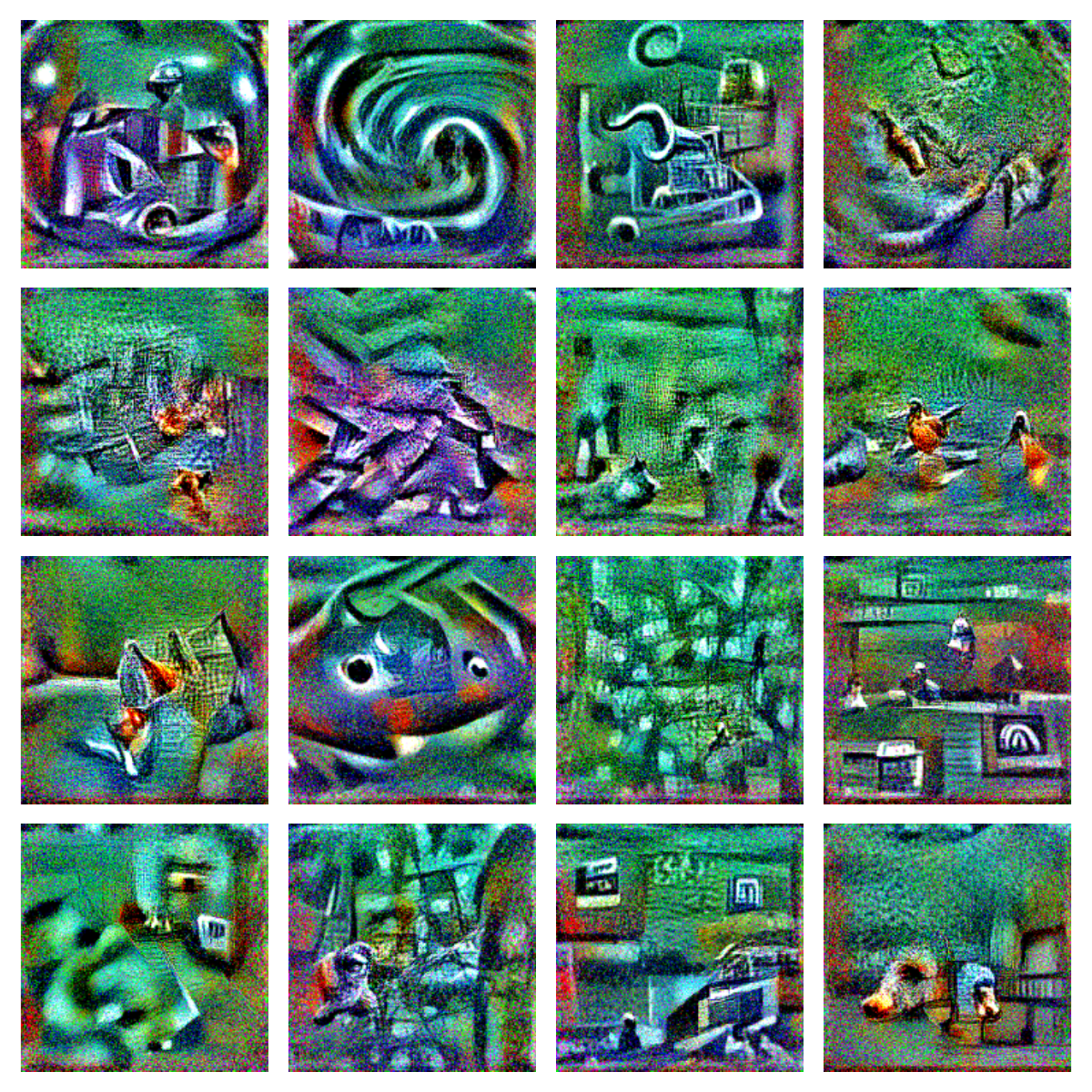}
		\vspace{-3mm}
		\caption{ImageNet}
		\label{app:fig:vis:imagenet}
	\end{subfigure}%
	\hspace{3mm}%
	\begin{subfigure}[t]{0.42\linewidth}
		\centering
		\includegraphics[width=\linewidth]{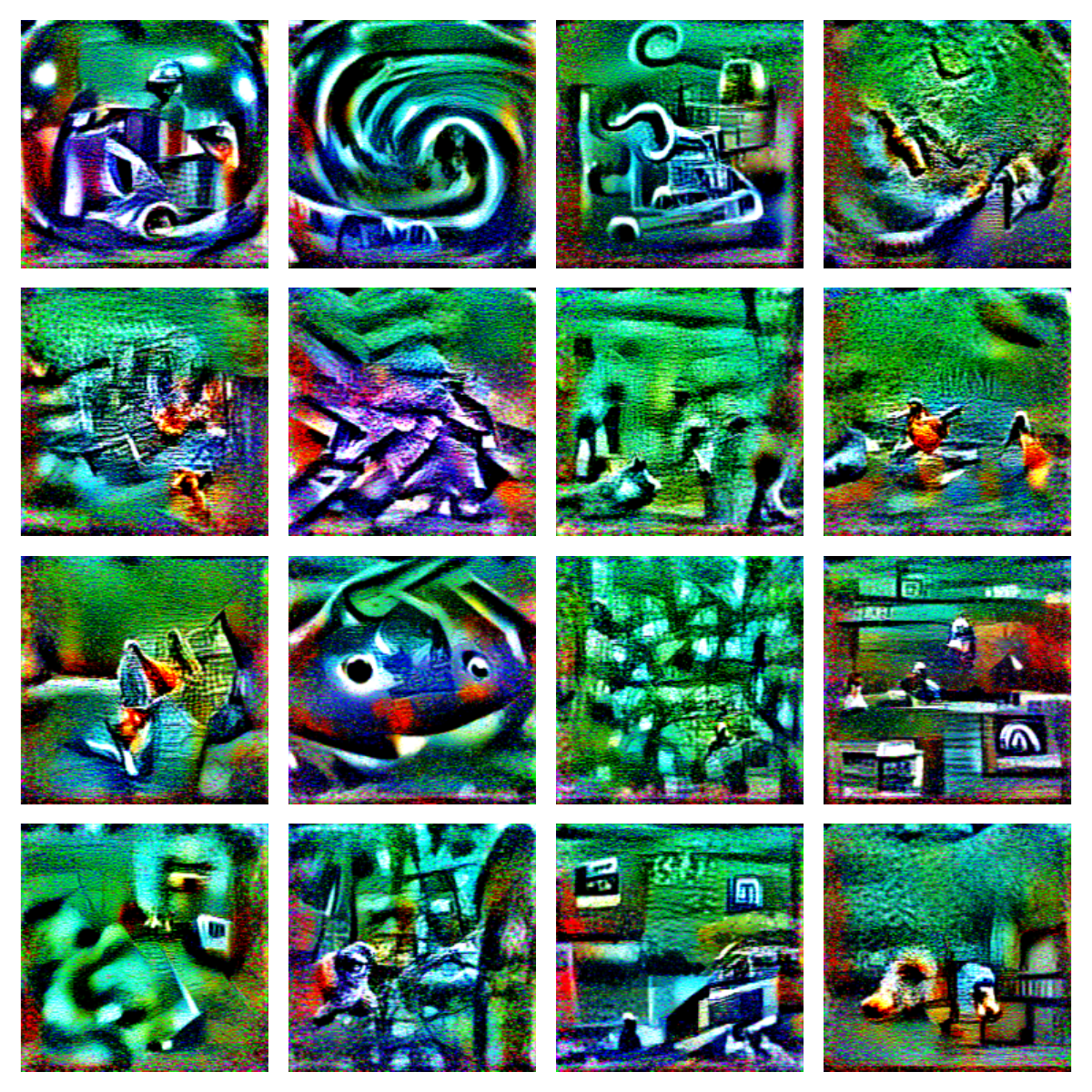}
		\vspace{-3mm}
		\caption{ImageNet (filtered)}
		\label{app:fig:vis:imagenet_fil}
	\end{subfigure}
	
	\caption{
		Visualization of samples within the synthetic dataset before (left) and after (right) the low-pass filter, generated by (a, b) a ResNet-20 model pre-trained on CIFAR-10 dataset, (c, d) a ResNet-20 model pre-trained on CIFAR-100 dataset, and (e, f) a ResNet-18 model pre-trained on ImageNet dataset.
		See Appendix~\ref{app:subsec:visual} for details.
		}
	\label{app:fig:vis}
\end{figure}

Although \method is applicable to any ZSQ methods that generate a synthetic dataset, investigating 1) the training set utilized for the highest performance and 2) the effect of the low-pass filter (Idea 1) is essential in understanding \method.
Figure~\ref{app:fig:vis} presents a visualization of images from three synthetic datasets before and after the low-pass filter.
These datasets are generated following the baseline method which is detailed in Appendix~\ref{appendix:setup}, by three different models: a ResNet-20 model pre-trained on CIFAR-10 dataset (Figures~\ref{app:fig:vis:cifar10} and~\ref{app:fig:vis:cifar10_fil}),
a ResNet-20 model pre-trained on CIFAR-100 dataset (Figures~\ref{app:fig:vis:cifar100} and~\ref{app:fig:vis:cifar100_fil}), and a ResNet-18 model pre-trained on ImageNet dataset (Figures~\ref{app:fig:vis:imagenet} and~\ref{app:fig:vis:imagenet_fil}).
In order to effectively visualize the effect of the low-pass filter, we set the filtering hyperparameter $D_0$ to 8, 8, and 40 for CIFAR-10, CIFAR-100, and ImageNet datasets, respectively.
We have two observations from Figure~\ref{app:fig:vis}.
First, the visualized images show distinct patterns and differences across various classes.
Second, the low-pass filter removes noise effectively while preserving essential features in the generated images, which are noticeable especially in lower resolution samples.

\subsection{Further Analysis on the Difficulty Threshold $\tau$}
\label{app:subsec:tau_analysis}

In Section~\ref{subsec:q5} and Figure~\ref{fig:hyp}, we conduct a hyperparameter analysis to analyze the robustness of \method towards newly introduced hyperparameters.
We investigate this aspect across various models and datasets to ensure \method reflects similar tendencies across multiple settings.
Figure~\ref{app:fig:tau} shows the ZSQ accuracy of (a) a ResNet-20 (R-20) model pre-trained on CIFAR-10 dataset, (b) a ResNet-20 (R-20) model pre-trained on CIFAR-100 dataset, and (c) a MobileNet-V2 (MV2) model pre-trained on ImageNet dataset, with different $\tau$ values.
Note that Figure~\ref{fig:hyp}(b) introduces the result of a ResNet-18 model on ImageNet dataset.
We also depict the performance of the state-of-the-art competitor as a red line for all figures.
Note that \method shows similar tendency across different settings, while (a) R-20 + CIFAR-10 maximizes with the $\tau$ value of 0.7.
This is because the error rate of pre-trained models (see Figure~\ref{fig:error_rate}) begins to increase at a higher difficulty level of approximately 0.65 for CIFAR-10, compared to 0.5 for the others.
In summary, the optimal $\tau$ should provide a nice trade-off between containing sufficient samples and not using wrong samples.

\begin{figure}[t]
	\centering
	\begin{subfigure}[t]{0.32\linewidth}
		\centering
		\includegraphics[width=\linewidth]{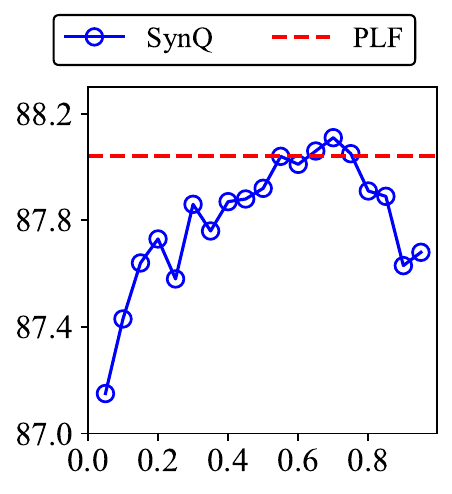}
		\appfignoisecaptionvspace
		\caption{R-20 + CIFAR-10}
		\label{app:fig:tau:cifar10}
	\end{subfigure}
	\hspace{3mm}%
	\begin{subfigure}[t]{0.32\linewidth}
		\centering
		\includegraphics[width=\linewidth]{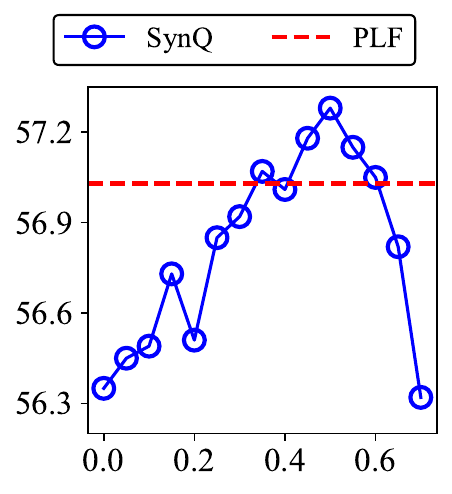}
		\appfignoisecaptionvspace
		\caption{R-20 + CIFAR-100}
		\label{app:fig:tau:cifar100}
	\end{subfigure}%
	\hspace{3mm}%
	\begin{subfigure}[t]{0.3025\linewidth}
		\centering
		\includegraphics[width=\linewidth]{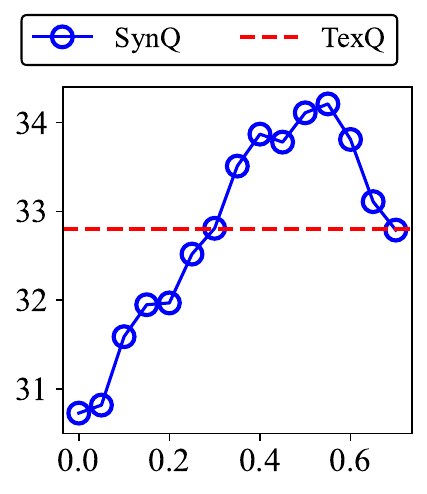}
		\appfignoisecaptionvspace
		\caption{MV2 + ImageNet}
		\label{app:fig:tau:mobilenetv2}
	\end{subfigure}
	\caption{
			Hyperparameter analysis of difficulty threshold $\tau$ with ResNet-20 (R-20) model pre-trained on CIFAR-10 and CIFAR-100 datasets, and MobileNetV2 (MV2) model pre-trained on ImageNet dataset.
			See Appendix~\ref{app:subsec:tau_analysis} for details.}
	\label{app:fig:tau}
\end{figure} 

\section{Details on the Experimental Setup}
\label{appendix:setup}
We describe the details on the experimental setup, including datasets, competitors, hyperparameters, implementation, and training.

\smallsection{Datasets}
We utilize three benchmark datasets, CIFAR-10, CIFAR-100~\citep{CIFAR}, and ImageNet (ILSVRC 2012)~\citep{ImageNet} to evaluate the classification accuracy of the quantized model obtained by \method.
We directly use both CIFAR-10 and CIFAR-100 datasets in TorchVision package.
Note that we utilize real datasets only for evaluation purposes.

\smallsection{Competitors}
We briefly summarize the details of the competitors of \method as follows:

\begin{itemize}[leftmargin=7mm, itemsep=1mm]
	\item \textbf{GDFQ}~\citep{GDFQ} is the first method to utilize a knowledge-matching generator to produce synthetic data which is guided by both batch normalization statistics loss and
	cross-entropy loss.
	
	\item \textbf{ARC}~\citep{ARC} or AutoReCon is a neural architecture search-based image reconstruction method.
	
	\item \textbf{Qimera}~\citep{Qimera} uses superposed latent embeddings to generate synthetic boundary supporting samples.
	
	\item \textbf{AIT}~\citep{AIT} improves the loss function and gradients for ARC to generate better samples, which we denote it as AIT + ARC.
	
	\item \textbf{IntraQ}~\citep{IntraQ} highlights the intra-class	heterogeneity and retains this property in the synthetic dataset for better performance.
	
	\item \textbf{AdaSG}~\citep{AdaSG} plots the ZSQ problem as a zero-sum game between two players, the generator and the quantized network to generate adaptive samples for the synthetic dataset.
	
	\item \textbf{AdaDFQ}~\citep{AdaDFQ} further generalizes AdaSG to adaptively regulate the adaptability of the synthetic samples.
	
	\item \textbf{HAST}~\citep{HAST} pays more attention to difficult samples by generating difficult samples and further promoting the sample difficulty when training the quantized model.
	
	\item \textbf{TexQ}~\citep{TexQ} retains the texture feature distributions within the synthetic dataset by using synthetic calibration centers to calibrate samples.

    \item \textbf{PLF}~\citep{PLF} evaluates synthetic data to assign pseudo-labels with different reliability to avoid misleading training.
	
\end{itemize}

Additionally, we compare with PSAQ-ViT~\citep{PSAQ-ViT} and Genie~\citep{Genie} for the ViT (see Section~\ref{subsec:q2}) and PTQ (see Appendix~\ref{app:subsec:application_genie}) experiments, respectively.

\smallsection{Baseline}
We introduce the baseline method to produce the synthetic dataset for the main results and observations (e.g. Tables~\ref{tab:ablation} and~\ref{app:tab:align}).
We adopt calibration center synthesis~\citep{TexQ}, difficult sample generation, and sample difficulty promotion~\citep{HAST}.
Producing the synthetic dataset consists of three stages.
First, we produce calibration centers following~\cite{TexQ}, one center each for all possible classes.
Second, we produce the synthetic dataset with two additional losses, hard-sample-enhanced inception loss $\loss_{HIL}$ from HAST and layered batch normalization statistics alignment loss $\loss^{G}_{L-BNS}$, added on top of Equation~\ref{eq:eq1}.
Lastly, we attach a perturbation to each image following sample difficulty promotion from HAST, to make generated samples more difficult for the quantized model.
We select this baseline that combines only the synthetic dataset production part of the two papers HAST and TexQ, in order to intentionally set baselines only for the first step and replace the existing fine-tuning process with the proposed synthesis-aware fine-tuning.
Refer to the original papers for further details.

\smallsection{Hyperparameters}
We conduct a grid search to validate hyperparameters, and select the set with the best performance.
Table \ref{tab:hyperparam_range} reports the searched hyperparameter ranges of \method for the ImageNet dataset experiment.
For competitors, we search within the range described in each paper.
We conduct 5 iteration for each experiments and report the mean and standard deviation of the results.

\begin{table}[ht]
	\centering
	\vspace{1mm}
	\caption{Hyperparameter ranges for \method.}
	\renewcommand{\arraystretch}{1.1}
		\begin{tabular}{ll}
			\toprule
			\textbf{Hyperparameter} & \textbf{Range} \\
			\midrule
			$\alpha_1$ & [0.2, 0.4, 0.6, 0.8, 1] \\
			$\alpha_2$ & [0.01, 0.04, 0.1] \\
			$\alpha^C$ & [0.4, 1, 2.5] \\
			$\lambda_{P}$ & [0.25, 0.5, 0.75, 1, 1.5, 2, 3, 4] \\			
			\midrule
			$\lambda_{CE}$ & [5, 5e-1, 5e-2, 5e-3] \\
			$D_0$ & [20, 40, 60, 80, 100] \\
			$\tau$ & [0.5, 0.55, 0.6, 0.65, 0.7] \\
			$\lambda_{CAM}$ & [20, 50, 100, 200, 300, 500, 2000] \\
			CAM Technique & [CAM, Grad-CAM, Grad-CAM++] \\
			\bottomrule
		\end{tabular}
		\vspace{4mm}
	\label{tab:hyperparam_range}
\end{table}


\smallsection{Implementation and Machine}
We implement \method with PyTorch and TorchVision libraries in Python.
For the other methods, we reproduce the result using their open-source code if possible and implement them otherwise.
All of our experiments were done at a workstation with Intel Xeon Silver 4214 and RTX 3090.

\smallsection{Training Details}
We first generate the calibration centers with a constant learning rate of 0.05, following TexQ~\citep{TexQ}.
Then, we optimize samples using the loss function described in Equation~\ref{eq:eq1} with the Adam optimizer to generate the synthetic dataset.
This optimizer has a momentum of 0.9 and an initial learning rate of 0.5. The synthetic images are updated over 1,000 iterations, with the learning rate decaying by a factor of 0.1 whenever the loss does not decrease for 50 consecutive iterations.
For all datasets, a batch size of 256 is used, resulting in the generation of a total of 5,120 images.
For the fine-tuning of the quantized model, the procedure follows Equation~\ref{eq:eq6}, employing SGD with a momentum of 0.9 and a weight decay of 1e-4.
The batch size is set to 256 for CIFAR-10/100 and 16 for ImageNet.
Initial learning rate is searched within the range of \{1e-4, 1e-5, 1e-6\} and is decayed by a factor of 0.1 over training epochs $n_{ep} = 100$.

\smallsection{ViT Quantization Experiment}
We compare the ZSQ precision of PSAQ-ViT~\citep{PSAQ-ViT} with that of \method applied on it.
PSAQ-ViT generates the synthetic dataset based on the patch similarity, substituting the batch normalization statistics in CNN models.
The pre-trained DeiT-Tiny, DeiT-Small~\citep{DeiT}, Swin-Tiny, and Swin-Small~\citep{Swin} models on ImageNet dataset is obtained from timm~\citep{Timm} library.
We follow \citet{PSAQ-ViT} for the experimental setup, where only 32 images are used from the synthetic dataset.

\end{document}